
\documentclass{article}
\usepackage[utf8]{inputenc}
\usepackage[english]{babel}
\usepackage[top=1in,bottom=1in,left=1in,right=1in]{geometry}

\usepackage{amsmath,amsfonts}
\usepackage{algorithmic}
\usepackage{algorithm}
\usepackage{array}
\usepackage[caption=false,font=normalsize,labelfont=sf,textfont=sf]{subfig}
\usepackage{textcomp}
\usepackage{stfloats}
\usepackage{url}
\usepackage{verbatim}
\usepackage{graphicx}
\usepackage{cite}
\hyphenation{op-tical net-works semi-conduc-tor IEEE-Xplore}

\usepackage{bbm,bm,amssymb,mathtools,mathrsfs}
\usepackage{amsthm}
\usepackage{color}
\usepackage{dsfont}
\usepackage{mathrsfs}

\newcommand{\F}{\mathscr F}
\newcommand{\G}{\mathscr G}
\newcommand{\R}{\mathbb R}
\newcommand{\N}{\mathbb N}
\newcommand{\sym}{\mathfrak{S}}
\newcommand{\one}{\mathbbm 1}
\newcommand{\tvec}{\text{vec}}
\newcommand{\sgn}{\mathrm{sgn}}
\newcommand{\ri}{\mathrm{ri}}
\newcommand{\ls}{\mathcal{L}}
\newcommand{\ran}{\R_{\mathrm{an}}}
\newcommand{\ndoc}{N}
\newcommand{\tdoc}{T}
\newcommand{\seqs}{\cup_{t=1}^\infty[\omega]^t}
\newcommand{\seqso}{\cup_{t=0}^\infty[\omega]^t}
\newcommand{\compl}{\mathsf c}
\newcommand{\X}{\mathcal X}
\newcommand{\class}{\mathbb L}
\newcommand{\C}{\mathcal C}
\newcommand{\CC}{\mathscr C}
\DeclareRobustCommand{\looongrightarrow}{%
  \DOTSB\relbar\joinrel\relbar\joinrel\relbar\joinrel\relbar\joinrel\rightarrow
}
\renewcommand{\omega}{\zeta}

\newtheorem{theorem}{Theorem}
\newtheorem{lemma}[theorem]{Lemma}
\newtheorem{remark}[theorem]{Remark}
\newtheorem{definition}[theorem]{Definition}
\newtheorem{proposition}[theorem]{Proposition}
\newtheorem{example}[theorem]{Example}

\newcommand{\cmt}[1]{{\color{black}#1}}

\begin{document}

\title{Next-token Prediction Capacity: General Upper Bounds and a Lower Bound for Transformers}

\author{Liam Madden, Curtis Fox, and Christos Thrampoulidis
\thanks{This work was partially funded by a UBC DSI Postdoctoral Fellowship, NSERC Discovery Grant No. 2021-03677, NSERC ALLRP 581098-22, and a CIFAR AI Catalyst Grant.}
\thanks{Liam Madden and Curtis Fox are with the Department of Computer Science, University of British Columbia, Vancouver, BC, Canada. Liam Madden and Christos Thrampoulidis are with the Department of Electrical and Computer Engineering, University of British Columbia, Vancouver, BC, Canada.}}

\markboth{Journal of \LaTeX\ Class Files,~Vol.~1, No.~2, December~2023}%
{Shell \MakeLowercase{\textit{et al.}}: A Sample Article Using IEEEtran.cls for IEEE Journals}


\maketitle

\begin{abstract}
Given a sequence of tokens, such as words, the task of next-token prediction is to predict the next-token conditional probability distribution. Decoder-only transformers have become effective models for this task, but their properties are still not fully understood. In particular, the largest number of distinct context sequences that a decoder-only transformer can interpolate next-token distributions for has not been established. To fill this gap, we prove upper and lower bounds on this number, which are equal up to a multiplicative constant.
We prove these bounds in the general setting where next-token distributions can be arbitrary as well as the empirical setting where they are calculated from a finite number of document sequences. Our lower bounds are for one-layer multi-head decoder-only transformers and our proofs highlight an important injectivity property satisfied by self-attention. Furthermore, we provide numerical evidence that the minimal number of parameters for memorization is sufficient for being able to train the model to the entropy lower bound.
\end{abstract}


\section{Introduction}
Large language models have become unquestionably effective at certain language tasks, but are they really learning? And if they are, are they learning in ways that we expect; are they learning in a way that can even be tractably unraveled? A researcher with some intuitions about how language works may invent an effective language model, but that does not mean the intuitions are correct. In this paper, we will prove, without recourse to any intuitions, that the now ubiquitous decoder-only transformer model is capable of interpolating arbitrary data sets, in the next-token prediction setting, using the minimal number of parameters required. In fact, it can even do this with an extremely simplified architecture which bares more resemblance to polynomial interpolation than to natural language processing. To get there, we will first give a brief overview of relevant concepts.

Ten years after Bengio et al. \cite{bengio2000neural} introduced neural probabilistic language models,
Tom\'{a}\v{s} Mikolov \cite{mikolov2012phd} constructed a language model that takes a sequence of tokens (e.g. words) as input, embeds the tokens into a vector space using a Skip-gram model, then inputs the sequence of token vectors to a recurrent neural network (RNN). Three years later, the attention mechanism was developed for combining a sequence of vectors into a single vector \cite{bahdanau2015neural}. One year later, self-attention was developed as a modification to attention \cite{cheng2016long}. One year after that, Vaswani et al. \cite{vaswani2017attention} recognized that self-attention made it unnecessary to use a RNN and so replaced the RNN with a feedforward neural network (FNN), introducing the modern transformer model. This gives a brief timeline of the progress leading to modern transformers. But language models have been around for much longer.

75 years ago, Claude Shannon introduced the concept of the entropy of language and used the predictability of language to study it \cite{shannon1948mathematical,shannon1951prediction}. The entropy of a probability mass function $p$ over a finite set $\X$ is the quantity
$-\sum_{x\in\X}p(x)\log p(x)$ and it satisfies certain properties such that it intuitively captures ``how much `choice' is involved in the selection of the event or of how uncertain we are of the outcome'' \cite{shannon1948mathematical}. If we are trying to fit a probability mass function, $q_{\theta}$, parameterized by the vector $\theta\in\R^k$, to $p$, then we can do so by minimizing the cross-entropy of $q_\theta$ relative to $p$, $-\sum_{x\in\X}p(x)\log q_{\theta}(x)$, which is greater than or equal to the entropy of $p$ by Gibb's inequality. In the setting of next-token prediction, we have a vocabulary, $[\omega]=\{1,\ldots,\omega\}$, and we assume that each context sequence $\alpha\in\cup_{t=0}^{T-1}[\omega]^t$ has a corresponding next-token probability mass function $p(\cdot\mid\alpha)$ over $[\omega]$. Here, the cross-entropy of $q_\theta$ relative to $p$ is defined as $-\sum_{t=1}^T\sum_{\alpha\in[\omega]^{t-1}}\sum_{\gamma=1}^\omega p(\alpha,\gamma)\log q_\theta(\gamma\mid \alpha)$.

However, in practice, instead of being given $p$, we are generally given a text corpus of $N$ documents.
\cmt{
For example, the text corpus could consist of books in a library, viewing each book as an individual document, or it could consist of articles on the internet, viewing each article as an individual document.
To analyze the $N$ documents, we view them as sequences of tokens
}
$\beta^1,\ldots,\beta^{N}\in[\omega]^{T}$. In this case, it turns out that the cross-entropy of $q_\theta$ relative to the empirical $\hat{p}$, calculated from the text corpus, is equal to $-\frac{1}{N}\sum_{t=1}^{T}\sum_{j=1}^{N}\log q_{\theta}(\beta_{t}^j\mid \beta_1^j,\ldots,\beta_{t-1}^j)$, and so $\hat{p}$ does not have to be computed. If $\hat{p}(\alpha)$ is nonzero for $n$ $\alpha$'s in $\cup_{t=1}^{T-1}[\omega]^t$,
then we show, in Theorem~\ref{thm:sontag}, that $\approx n\omega$ parameters is necessary for the infimum of the cross-entropy loss to equal the entropy of $\hat{p}$. Then, in Theorem~\ref{thm:full}, we show that when $q_{\theta}$ is a one-layer multi-head decoder-only transformer, $\approx n\omega$ parameters is actually sufficient.

A one-layer multi-head decoder-only transformer model has three sub-layers: the embedding, the (multi-head) self-attention, and the FNN. We will show that this model has optimal interpolation power by first showing that it already does with embedding dimension 1 and only one self-attention head. In this setting, the token embeddings and positional embeddings are scalars. Given an input sequence, the self-attention sub-layer outputs a scalar as well. In Theorem~\ref{thm:injective}, we show that it does so injectively, which allows the model to exploit the interpolation power of the FNN. This leads us to consider simple alternatives to self-attention that satisfy the same injectivity property. As it turns out, taking the weighted average of embedded token sequences, which we call token-averaging, works as well, as we show in Theorem~\ref{thm:avginj}. Thus, replacing self-attention with token-averaging in the transformer model results in another model with optimal memory capacity, as we show in Theorem~\ref{thm:avgmem}.

The memory capacity, the largest $n$ such that a model can interpolate $n$ generic input-output pairs, is a classic topic in learning
theory going back to Cover \cite{cover1965geometrical}, but it is only recently, with the shift in perspective from the classical bias-variance trade-off to the double descent curve of Belkin et al. \cite{belkin2019reconciling}, that going beyond the interpolation threshold has been viewed as a benefit to generalization. Belkin et al. \cite{belkin2019reconciling} argue that when there is an abundance of interpolating solutions, simple algorithms such as gradient descent tend to choose the smoothest interpolating solution, which is a form of Occam's razor and therefore may be at the heart of the ability for highly expressive models to generalize beyond the data sets they are trained on.

\subsection{Results}

We formalize the idea of a probabilistic language space and show that next-token distributions are sufficient to fully specify an underlying stochastic process for language. In particular, Propositions~\ref{thm:firstsecondkind} and~\ref{thm:secondthirdkind} prove that the extension from next-token distributions to a probability measure on the cylindrical $\sigma$-algebra is unique by: (1) showing that next-token distributions correspond to probabilities of thin cylinders, (2) showing that the system of the empty set and thin cylinders is a $\pi$-system, and (3) applying Dynkin's $\pi$-$\lambda$ theorem.

Then we formally define next-token prediction capacity as the memory capacity of a next-token prediction model. We also formally define empirical next-token prediction capacity which applies to the setting where the data comes from a text corpus. We upper bound the next-token prediction capacity for continuously differentiable models in Lemma~\ref{lma:sard}, upper bound the next-token prediction capacity by the empirical next-token prediction capacity for any model in Lemma~\ref{lma:entropy}, and upper bound the empirical next-token prediction capacity for continuously differentiable and $(\ran,\exp)$-definable models in Theorem~\ref{thm:sontag}.

Then we focus our attention on the one-layer multi-head decoder-only transformer model. We lower bound the next-token prediction capacity for this model in Theorem~\ref{thm:full}. Let $k$ be the number of model parameters and $n$ be the number of distinct context sequences. Then our upper and lower bounds are summarized in Table~\ref{table}. The particular constant hidden by $\Omega(\cdot)$ is given in Equation~\eqref{eq:const}.

\begin{table*}[t]
\caption{Next-token Prediction Capacity Bounds for a Transformer with $k$ Parameters}\label{table}
    \centering
    \begin{tabular}{c|c|c}
         \hline
         & General setting & Empirical setting \\
         \hline
         Upper bounds & $\frac{k}{\omega-1}$, Lemma~\ref{lma:sard} & $\left(2+\frac{1}{\omega-1}\right)\frac{k}{\omega-1}+1$, Theorem~\ref{thm:sontag} \\
         \hline
         Lower bounds & $\Omega\left(\frac{k}{\omega-1}\right)$, Theorem~\ref{thm:full} & $\Omega\left(\frac{k}{\omega-1}\right)$, Lemma~\ref{lma:entropy} + Theorem~\ref{thm:full}\\
         \hline
    \end{tabular}
\end{table*}

We prove the next-token prediction capacity lower bound in Theorem~\ref{thm:full} by reducing to the setting with embedding dimension equal to one. In this setting, the self-attention sub-layer outputs scalars to the FNN sub-layer. Given $n$ distinct context sequences, let $x\in\R^n$ denote the $n$ scalar outputs. Then the model has the form $V^\top\psi(wx^\top)$ where $\psi$ is the activation function of the FNN. If $\psi(wx^\top)$ has rank $n$, then we can solve for $V$ given any target $Y\in\R^{\omega\times n}$. Thus, the memory capacity of $V^\top\psi(wx^\top)$ comes down to the rank of $\psi(wx^\top)$.

In Theorem~\ref{thm:polyrank}, we show that if $\psi$ is a polynomial with $k$ monomials and $b\in\R^n$ has entries which are nonzero and distinct, then the rank and Kruskal rank of $\psi(ab^\top)$ are $\min\{m,n,k\}$ for generic $a\in\R^m$. In Theorem~\ref{thm:realrank}, we show that if $\psi$ is real analytic at zero and not a polynomial there, and $b\in\R^n$ has entries which are nonzero and distinct, then the rank and Kruskal rank of $\psi(ab^\top)$ are $\min\{m,n\}$ for generic $a$ in a neighborhood of zero in $\R^m$.

Thus, the rank of $\psi(wx^\top)$ comes down to whether or not the entries of $x$ are nonzero and distinct. To show that they are, we prove, in Theorem~\ref{thm:injective}, that self-attention maps injectively into $\R\backslash\{0\}$. This also leads us to consider token-averaging as a simple alternative to self-attention. In Theorem~\ref{thm:avginj}, we show that token-averaging also maps injectively into $\R\backslash\{0\}$, which leads to an optimal memory capacity lower bound, Theorem~\ref{thm:avgmem}, for the composition of token-averaging and a FNN.

\subsection{Related Work}
\label{sec:relatedwork}

\subsubsection{Probability of language} One way to think about language probabilistically is to assume that for every sequence of $n-1$ tokens (whether letters or words), there is a corresponding next-token distribution. This is equivalent to assuming that there is an underlying Markov process of order $n-1$. This concept is called the $n$-gram model of language and was introduced by Markov in 1913 when he computed the $n$-gram probabilities of sequences of vowels and consonants in Pushkin's \textit{Eugene Onegin} \cite{markov1913}. 35 years later, Shannon analyzed the statistical entropy of the $n$-gram model in two seminal papers, establishing the field of information theory \cite{shannon1948mathematical,shannon1951prediction}.
The concept of statistical entropy goes back to the 1870s when Boltzmann showed that the statistical entropy of microstates in a thermodynamic system is proportional to the thermodynamic entropy of the system.

A more realistic assumption on language is that for every sequence of \textit{any} length there is a corresponding next-token distribution. This idea was introduced in Shannon's 1951 paper when he defined the entropy rate as the limit of the normalized $n$-gram entropies. From the next-token distributions, we can define a function $p$ on sequences of any length using the chain rule of probability, and this $p$ will be a probability mass function when restricted to sequences of any fixed length. We can also define a probability measure on the cylindrical $\sigma$-algebra, and from this probability measure define a stochastic process such that its finite-dimensional distributions are given by $p$. But, unlike with the $n$-gram model, it is not clear that the stochastic process is fully specified by the next-token distributions. To be precise, while the extension of $p$ to a probability measure on the cylindrical $\sigma$-algebra is standard, see e.g. the sequence space subsection of Section 2 in \cite{Billingsley}, the extension proceeds in two steps, first to the cylindrical algebra, then to the cylindrical $\sigma$-algebra, and only the uniqueness of the second step appears to have been noticed, rather than the uniqueness of the full extension.

Another way to think about language probabilistically is to assume that for every sequence of any length there is a corresponding \textit{probability}, i.e. there is a probability mass function over sequences of any length. Booth and Thompson \cite{booth1973applying} call such a language a probability measure language and call its probability mass function a probabilistic word function. But the sequence probabilities cannot come from next-token distributions unless a special end-of-sequence token is included to ensure that the underlying stochastic process terminates with probability one. Keeping the end-of-sequence token but relaxing the assumption that it appears in every sequence with probability one, Du et al. \cite{du2023measure} use the extension from the previous paragraph to induce a probability measure over finite and infinite sequences simultaneously. This is an interesting direction, but is conceptually distinct from the perspective taken in the present paper.

\subsubsection{Memory capacity}
The memory capacity---also known as finite sample expressivity, memorization capacity, storage capacity, or, simply, capacity---of a machine learning model with $k$ parameters is the largest $n$ such that it can interpolate $n$ generic input-output pairs, where by generic we mean that the set of exceptions lies on the zero set of a nontrivial real analytic function and is, therefore, measure zero and closed \cite[Corollary 10]{gunning1965analytic}. The idea of memory capacity goes back to Cover \cite{cover1965geometrical} who considered the separating capacities of families of surfaces. Later, Baum \cite{baum1988multilayer} proved that a two-layer FNN with Heaviside activation has memory capacity at least $\approx k$ where outputs are in $\{\pm 1\}$.
Sakurai \cite{sakurai1992networks} extended this to three-layer FNNs. Huang and Huang \cite{huang1991bounds} proved it for two-layer FNNs with sigmoid activation and outputs in $\R$. Yamasaki \cite{yamasaki1993lower} sketched a proof for $L$-layer FNNs with sigmoid activation. Huang \cite{huang2003learning} proved it for three-layer FNNs with sigmoid activation. Yun et al. \cite{yun2019small} proved it for two-layer FNNs with ReLU activation and outputs in $\{\pm 1\}$ (their Corollary 4.2), and for three-layer FNNs with ReLU activation and outputs in $\R$ (their Theorem 3.1). Bubeck et al. \cite{bubeck2020network} proved it for two-layer FNNs with ReLU activation and outputs in $\R$. Madden and Thrampoulidis \cite{madden2024memory} proved it for two-layer FNNs with general activations (excluding only low degree polynomials and low degree splines) and outputs in $\R$. Madden \cite{madden2024three} proved it for three-layer FNNs with general activations (excluding only polynomials and splines) and outputs in $\R$.

While all of the papers mentioned so far have been about the memory capacity of FNNs, Kim et al. \cite{kim2023provable}, Kajitsuka and Sato \cite{kajitsuka2024are}, and Mahdavi et al. \cite{mahdavi2024memorization} recently proved memory capacity lower bounds for transformers. However, none of these results apply to the next-token prediction setting where inputs are context sequences in $\seqs$ and outputs are next-token distributions in $\Delta^{\omega-1}$, the unit simplex in $\R^{\omega}$. We will show that this is the case for each of the three papers, but first let us clarify two points about the inputs and outputs. Regarding inputs, if $\beta=[\beta_1|\cdots|\beta_T]\in\R^{d\times T}$, then $\beta_1,[\beta_1|\beta_2],[\beta_1|\beta_2|\beta_3],\ldots$ are its unique contexts. More generally, if $\beta^1,\ldots,\beta^N\in\R^{d\times T}$, then $\{[\beta_1^j|\cdots|\beta_t^j]\mid j\in[N],t<T\}$ are the unique contexts of the list. If we use $n$ to denote the number of unique contexts, then it is possible for $N$ to go to infinity while $n$ remains fixed. Thus, the memory capacity for next-token prediction should be in terms of $n$, not $N$. Moving on to outputs, context sequences do not have unique next-tokens, they have next-token distributions, e.g. ``I like'' may be followed by ``dogs'' or ``cats.'' In other words, the next-token distributions are not necessarily one-hot. Thus, for next-token prediction, outputs should be in $\R^\omega$, not $\R^d$. Given these two subtleties, we use next-token prediction capacity to refer to the memory capacity of next-token prediction as opposed to the other settings considered in \cite{kim2023provable}, \cite{kajitsuka2024are}, and \cite{mahdavi2024memorization}.

Kim et al. \cite{kim2023provable} and Kajitsuka and Sato \cite{kajitsuka2024are} proved memory capacity results in the sequence-to-sequence setting, where inputs are document sequences of token embeddings in $\R^{d\times T}$ and outputs are in $[C]^{1\times \tdoc}$ or $[C]^{d\times \tdoc}$ for some $C\in\N$. This does not apply to the next-token prediction setting for two reasons. First, the number of document sequences, $N$, could go to infinity while the number of unique contexts, $n$, remains fixed, thus the number of parameters required to apply the result, $\Omega(\sqrt{NT})$ in \cite{kim2023provable} and $\Omega(NT)$ in \cite{kajitsuka2024are}, is vacuous in the next-token prediction setting. Second, the outputs are not in $\R^{\omega\times \tdoc}$. Remark 3.2 of \cite{kim2023provable} explains how the codomain can be extended to $\R^{\ell\times\tdoc}$ for any $\ell\in\N$ by fixing the precision level, $\epsilon>0$, but the new required number of parameters is $\Omega(\ell\log(\ell/\epsilon)\sqrt{\ndoc\tdoc})$, which goes to infinity as $\epsilon$ goes to zero. Furthermore, both of these results assume that the embedded tokens are sufficiently separated while we do not make any assumptions on inputs.

Mahdavi et al. \cite{mahdavi2024memorization} proved a memory capacity result for transformers in the setting where inputs are context sequences of token embeddings in $\R^{d\times \tau}$ and each output is a prediction of the next-token embedding in $\R^{d}$, showing that $\approx nd$ parameters are sufficient to interpolate $n$ input-output pairs, which is optimal. Interestingly, they analyzed a model without a FNN sub-layer and instead leveraged the number of self-attention heads to prove their result. However, besides only applying to outputs in $\R^d$, rather than $\R^{\omega}$, their result requires that all context sequences have the same length. Thus, their result does not apply to the unique contexts $\beta_1,[\beta_1|\beta_2],[\beta_1|\beta_2|\beta_3],\ldots$ of a document sequence $\beta\in\R^{d\times T}$, only to contexts $\alpha\in\R^{d\times \tau}$ for some fixed $\tau$.

\subsubsection{Optimization of transformers} We have reviewed the works most closely related to our memory capacity results, but there are many relevant studies exploring the optimization of one-layer transformers as well, such as \cite{tian2023scan,chen2024provably,tarzanagh2023transformers,bartlett-in-context,deora2024on}. However, most do not address the next-token prediction scenario we study here. The exceptions, such as \cite{makkuva2024attention,li2024mechanics,tian2023scan,tian2023joma,ildiz2024self}, rely on restrictive assumptions about input data and transformer dimensions, such as embedding size, to achieve minimal empirical risk. Additionally, optimization and learnability in the next-token prediction setting, albeit limited to linear models, have been recently studied in \cite{malach2023auto,thrampoulidis2024implicit}. Finally, the effectiveness of transformers for particular tasks has been explored in papers such as \cite{sanford2023represent,wang2024transformers}.

\subsubsection{Miscellaneous results}
Our upper bound, Theorem~\ref{thm:sontag}, can be seen as a generalization of Theorem 1 of Sontag \cite{sontag1997shattering} from input-output pairs in $\R^d\times \R$ to input-output pairs in $\seqs\times\Delta^{\omega-1}$. However, our proof is quite different because $\seqs$ is countable.
As for Theorems~\ref{thm:polyrank} and~\ref{thm:realrank}, these are classical results when $a$ and $b$ are both generic, but to the best of our knowledge, they are novel results in the setting where $b\in\R^n$ has entries which are nonzero and distinct. However, closely related results have been shown before. Tamura \cite{tamura1991capabilities} proved a version of Theorem~\ref{thm:realrank} when $\psi$ is the logistic function and restated the result in \cite{tamura1997capabilities}. Then Huang and Babri \cite{huang1998upper} proved it more generally when $\psi$ is bounded, nonlinear, and has a limit at either positive or negative infinity. Their result does not require $\psi$ to be real analytic at a point and not a polynomial there, but our result does not require $\psi$ to be bounded or to have limits. Thus, the two results are complementary. Interestingly, their proof sends the inputs of $\psi$ off to positive or negative infinity, while our proof only uses inputs in the interval of convergence of the point where $\psi$ is real analytic.

\subsection{Organization}

In Section~\ref{sec:prelims} we present notations, conventions, and prerequisites.
In Section~\ref{sec:probability} we carefully delineate the assumptions made to start talking about next-token prediction. Section~\ref{sec:probability} serves to ground the definition of next-token prediction capacity, but the reader that is already willing to accept the definition can safely skip to Section~\ref{sec:nexttoken}.
In Section~\ref{sec:nexttoken} we define next-token prediction capacity and prove the upper bound Theorem~\ref{thm:sontag}.
In Section~\ref{sec:transformer} we define the one-layer multi-head decoder-only transformer model.
In Section~\ref{sec:memory} we prove the injectivity result, the rank results, and the next-token prediction lower bound.
In Section~\ref{sec:experiment} we present a numerical experiment demonstrating that not only is the memory capacity on the same order as the number of parameters divided by the vocabulary size, but the transformer can already be trained to the entropy lower bound in this regime.
In Section~\ref{sec:tokenavg} we introduce the token-averaged FNN model and prove a next-token prediction capacity lower bound for it.
In Section~\ref{sec:conclusion}, we conclude.

\section{Preliminaries}
\label{sec:prelims}

We will use the following conventions: empty sums are zero, empty products are one, and we consider the empty sequence as an object, denoted $(~)$. Given a finite nonempty set $\X$, we define $\X^0=\{(~)\}$. Also, we denote the concatenation of two finite sequences $x$ and $y$ by $(x,y)$ and we denote the length of a sequence $x$ by $|x|$. Finally, we skip terms in a sum that do not exist, such as $0\log 0$.

We will use the following notation: $[n]$ denotes $\{1,\ldots,n\}$, $\Delta^{n-1}=\{x\in\R_{+}^n\mid \|x\|_1=1\}$ denotes the unit simplex in $\R^n$, $\ri\Delta^{n-1}=\{x\in\R_{++}^n\mid \|x\|_1=1\}$ denotes the relative interior of $\Delta^{n-1}$, $\binom{A}{n}$ denotes $\{B\subset A\mid |B|=n\}$, $\tvec$ denotes the column-wise vectorize operation, $e_k\in\R^n$ denotes the $k$th coordinate vector, $\one_n$ denotes the vector of ones in $\R^n$, $I\in\R^{n\times n}$ denotes the identity matrix, $\delta$ denotes the Kronecker delta function, $a^{(k)}$ indicates that the exponent $k$ is applied to the vector $a$ element-wise, $\sym_n$ denotes the symmetric group of degree $n$, $\exp$ denotes the exponential function, and $\phi$ denotes the softmax function,
\begin{align*}
    \phi:\R^n\to\R^n:x\mapsto \frac{\exp(x)}{\sum_{i=1}^n\exp(x_i)}.
\end{align*}

We also use the following conventions for matrices. If $a_i\in\R~\forall i\in[m]$, then $[a_i]_{i\in[m]}$ denotes $(a_1,\ldots,a_m)\in\R^m$. If $a_j\in\R^m~\forall j\in[n]$, then $[a_j]_{j\in[n]}$ denotes $[a_1|\cdots|a_n]\in\R^{m\times n}$. Going the other way, if $a\in\R^m$, then $a_i$ denotes element $i$ of $a$, and if $A\in\R^{m\times n}$, then $a_{i,j}$ denotes element $(i,j)$ of $A$. We will also index vectors and matrices using brackets. If $a\in\R^m$ and $\alpha\in[m]^k$, then $a[\alpha]$ denotes $[a_{\alpha_i}]_{i\in[k]}$. If $A\in\R^{m\times n}$, $\alpha\in[m]^k$, and $\beta\in[n]^{\ell}$, then $A[\alpha,\beta]$ denotes $[[a_{\alpha_i,\beta_j}]_{i\in[k]}]_{j\in[\ell]}$. In this context, a colon denotes $(1,2,\ldots)$, $i:j$ denotes $(i,\ldots,j)$, and $-1$ denotes the last index.

In the realm of real analytic functions, we use Taylor's theorem as as well as the fact that the zero set of a nontrivial real analytic function is measure zero and closed \cite[Corollary 10]{gunning1965analytic}. We say a property holds ``generically'' if the set of exceptions lies on the zero set of a nontrivial real analytic function. Note that $\phi$ is real analytic since the numerator and denominator are real analytic with the denominator being nowhere zero. Moreover, the entire self-attention sub-layer of the transformer model, which will be described in detail below, is real analytic since it is a composition of real analytic functions. Furthermore, if $\psi$ is real analytic at a point, then the entire transformer parameter mapping is real analytic at a point. Similarly, if $\psi$ is continuously differentiable, then the entire transformer parameter mapping is continuously differentiable, in which case we can apply Sard's theorem \cite{sard1941measure}.

\cmt{
\begin{lemma}[Sard's theorem]
\label{lma:sard1941}
Let $k\ge 1$. Let $M$ and $N$ be $C^k$ manifolds of dimension $m$ and $n$ respectively. Let $F:M\to N$ be $C^k$. If $m\le n$ or $m\le n+k-1$, then the set of critical values of $F$ has measure zero in $N$.
\end{lemma}
}

A critical point is a point where the differential is not surjective and a critical value is a value with level set (preimage/fiber) containing at least one critical point. \cmt{For example, if $m<n$, then every point is a critical point, hence the image of $F$ has measure zero in $N$.}

We also use the model-theoretic structure $(\ran,\exp)$ \cite{dries1994elementary,dries1994real,dries1998tame}. Polynomials, restricted analytic functions, and exp are the basic $(\ran,\exp)$-definable functions, but compositions, quotients (if the denominator is nowhere zero), inverses (of injective functions), and limits of $(\ran,\exp)$-definable functions are $(\ran,\exp)$-definable as well \cite[Lma. 2.3]{dries1998tame}. From this it can be seen that $\phi$, tanh, arctan, erf, and GELU are $(\ran,\exp)$-definable. Furthermore, if $\psi$ is $(\ran,\exp)$-definable, then the entire transformer parameter mapping is $(\ran,\exp)$-definable, and so its image either has nonempty interior or is nowhere dense \cite[Corollary B.4]{sontag1996critical}.

In the realm of linear algebra, we use the notion of Kruskal rank \cite{kruskal1977three}, as well as the Cauchy-Binet formula \cite[Sec. I.2.4]{gantmacher1960matrices} and the Leibniz determinant formula \cite[Def. 10.33]{axler2015linear}. In the realm of number theory, we use that $\exp(1)$ is transcendental \cite{hermite1873}. Finally, we also use basic concepts from probability theory to which we refer the reader to \cite{Billingsley} as a reference.

\cmt{
For variables and functions, we will use the notation that is most appropriate to the section. However, there are some notations that are used consistently throughout the paper, so we will summarize those here. We use $\omega$ to denote the number of tokens in the vocabulary. We use $\alpha,\beta,\gamma$ as example sequences from $\cup_{t=0}^\infty[\omega]^t$. We use $N$ to denote the number of document sequences and $n$ to denote the number of distinct context sequences. We use $\theta$ to denote the concatenation of model parameters and $h_\theta$ to denote the model. We use $\phi$ to denote the softmax function and $\psi$ to denote the activation function. We use $Z$ for the embedding matrix, $d$ for the embedding dimension, $U$ for the position matrix, $m_0$ for the number of self-attention heads, $W$ and $V$ for self-attention and FNN matrices, and $m$ for the FNN width.
}

\section{Probabilistic Language Space}
\label{sec:probability}

We start by abstracting notions about how language works into mathematical axioms. Specifically, given a sequence of words, we assume that the next word is governed by a probability distribution. Of course, this is not how language really works---we respond to another person's body language in conversation and the author writes about flowers they see on the side of the road---but it is a useful abstraction for certain tasks such as summarizing material online. We will axiomatize the concept of a probabilistic language space in three different ways and show that they are equivalent.
Note that we assume neither stationarity nor ergodicity.
Refer to Section~\ref{sec:relatedwork} for the context of this section's results within the related literature on the probability of language, tracing back to the seminal papers of Markov and Shannon. Also, note that, while this section serves to ground the concept of next-token prediction capacity, the reader who wishes to take the definition of next-token prediction capacity at face value can safely skip to the next section.

\begin{definition}
\label{def:p1}
A \textbf{probabilistic language space of the first kind} is a triple $(\X,A,p)$ that satisfies the following axioms:
\begin{enumerate}
    \item $\X$ is a finite nonempty set;
    \item $\X^0\subset A\subset \cup_{t=0}^\infty\X^t$;
    \item $p:\X\times A\to[0,1]$;
    \item $\forall x\in\cup_{t=1}^\infty\X^t$, $x\in A\iff p(x_t\mid x_1\ldots,x_{t-1})\neq 0$;
    \item $\forall x\in A$, $\sum_{y\in\X}p(y\mid x)=1$.
\end{enumerate}
\end{definition}

\begin{definition}
\label{def:p2}
A \textbf{probabilistic language space of the second kind} is a couple $(\X,p)$ that satisfies the following axioms:
\begin{enumerate}
    \item $\X$ is a finite nonempty set;
    \item $p:\cup_{t=1}^\infty \X^t\to[0,1]$;
    \item $\sum_{x\in\X}p(x)=1$;
    \item $\forall x\in\cup_{t=1}^\infty \X^t$, $\sum_{y\in\X}p(x,y)=p(x)$.
\end{enumerate}
\end{definition}

\begin{definition}
\label{def:p3}
A \textbf{probabilistic language space of the third kind} is a couple $(\X,P)$ that satisfies the following axioms:
\begin{enumerate}
    \item $\X$ is a finite nonempty set;
    \item $P$ is a probability measure on the cylindrical $\sigma$-algebra of $\X^\N$;
    \item $\sum_{x\in\X}P(\C(\{x\}))=1$;
    \item $\forall x\in\cup_{t=1}^\infty \X^t$, $\sum_{y\in\X}P(\C(\{(x,y)\}))=P(\C(\{x\}))$;
\end{enumerate}
where, for all $t\in\N$ and all $A\subset \X^t$, $\C(A)\coloneqq\{x\in\X^\N\mid (x_1,\ldots,x_t)\in A\}$.
\end{definition}

Axiom 5 of the first definition says that $p(\cdot\mid x)$ is a conditional probability mass function whenever it is defined. Axiom 4 says that $p(\cdot\mid x)$ is defined for all $x$ except those which have zero probability by the chain rule. Definition~\ref{def:p2} essentially applies the chain rule to the conditional probability mass functions from Definition~\ref{def:p1}. Note that for all $t\ge 2$, $\sum_{x\in\X^t}p(x)=\sum_{y\in\X^{t-1}}\sum_{z\in\X}p(y,z)=\sum_{y\in\X^{t-1}}p(y)$, so, by induction, $\sum_{x\in\X^t}p(x)=1$ for all $t\in\N$. In other words, $p$ is a probability mass function on $\X^t$ for every $t\in\N$, but also is compatible with being decomposed into the conditional probability mass functions of Definition~\ref{def:p1}.

The third definition requires greater explanation. Subsets of $\X^\N$ of the form $\C(A)$ for some $A\subset \X^t$ are called cylinders of rank $t$. Subsets of $\X^\N$ of the form $\C(\{x\})$ for some $x\in \cup_{t=1}^\infty\X^t$ are called thin cylinders. Let $\CC_0$ be the class of cylinders. Then $\CC_0$ is an algebra, as shown after Eq.~(2.17) in \cite{Billingsley}. The $\sigma$-algebra generated by it, $\CC$, is called the cylindrical $\sigma$-algebra of $\X^\N$. Axioms 3 and 4 of Definition~\ref{def:p3} ensure that $P$ can be reduced to a $p$ as in Definition~\ref{def:p2}.

It is quite easy to show that the first two definitions are equivalent, but distinguishing them so formally is useful because it makes the connection between the last two definitions more clear. Normally, $p$ is extended to $P$ without proving that there is actually a bijection between the set of $p$ such that $(\X,p)$ is a probabilistic language space of the second kind and the set of $P$ such that $(\X,P)$ is a probabilistic language space of the third kind. We will rigorously prove this after proving the equivalence of the first two definitions.

\begin{proposition}
\label{thm:firstsecondkind}
Let $\X$ be a finite nonempty set.
Define $\class_1$ as the set of all $p:\X\times A\to[0,1]$ such that $(\X,A,p)$ is a probabilistic language space of the first kind.
Define $\class_2$ as the set of all $p:\cup_{t=1}^\infty\X^t\to[0,1]$ such that $(\X,p)$ is a probabilistic language space of the second kind.
Then there is a bijection $\Phi_{12}:\class_1\to\class_2$ such that for all $p:\X\times A\to[0,1]$ in $\class_1$,
\begin{align*}
    \Phi_{12}(p)(x,y)=p(y\mid x)\Phi_{12}(p)(x)~\forall (y,x)\in\X\times A.
\end{align*}
\end{proposition}
\begin{proof}
Define $\Phi_{12}:\class_1\to \class_2$ in the following way. Let $p:\X\times A\to[0,1]$ be in $\class_1$. Extend $p$ to all of $\X\times \cup_{t=0}^\infty \X^t$ by setting $p(y\mid x)=0$ for all $(y,x)\in\X\times A^\compl$. Then, for all $x\in\cup_{t=1}^\infty\X^t$, define $q(x)=\prod_{t=1}^{|x|}p(x_t\mid x_1,\ldots,x_{t-1})$. Then $(\X,q)$ satisfies the first three axioms of Definition~\ref{def:p2} by construction. To see that it satisfies the fourth axiom as well, let $x\in\cup_{t=1}^\infty\X^t$ and observe
\begin{align*}
    \sum_{y\in\X}q(x,y)=q(x)\sum_{y\in\X}p(y\mid x)=q(x).
\end{align*}
Thus, $q\in\class_2$. Define $\Phi_{12}(p)=q$.

We will show that $\Phi_{12}$ is a bijection by constructing its inverse. Define $\Phi_{21}:\class_2\to\class_1$ in the following way. Let $q\in\class_2$. Extend $q$ to all of $\cup_{t=0}^\infty \X^t$ by setting $q(~)=1$. Define $A=\X^0\cup \{x\in\cup_{t=1}^\infty\X^t\mid q(x)\neq 0\}$ and $p:\X\times A\to[0,1]:(y,x)\mapsto q(x,y)/q(x)$. Then $p$ is well-defined and satisfies the first four axioms of Definition~\ref{def:p1} by construction. To see that it satisfies the fifth axiom, we will use induction. First, $\sum_{y\in\X}p(y\mid~)=\sum_{y\in\X}q(y)=1$. Now let $t\in\N$ and suppose $\sum_{y\in \X}p(y\mid x)=1$ for all $x\in A$ such that $|x|<t$. Then let $x\in A$ such that $|x|=t$ and observe
\begin{align*}
    \sum_{y\in\X}p(y\mid x)=\sum_{y\in\X}\frac{q(x,y)}{q(x)}=1.
\end{align*}
Thus, $p\in\class_1$. Define $\Phi_{21}(q)=p$. Clearly, $\Phi_{21}$ is the inverse of $\Phi_{12}$.
\end{proof}

\begin{proposition}
\label{thm:secondthirdkind}
Let $\X$ be a finite nonempty set.
Define $\class_2$ as the set of all $p:\cup_{t=1}^\infty\X^t\to[0,1]$ such that $(\X,p)$ is a probabilistic language space of the second kind.
Define $\class_3$ as the set of all probability measures on $(\X^\N,\CC)$ such that $(\X,P)$ is a probabilistic language space of the third kind.
Define $\Phi_{32}:\class_3\to\class_2:P\mapsto P(\C(\{\cdot\}))$.
Then $\Phi_{32}$ is a bijection.
\end{proposition}
\begin{proof}
First, $\Phi_{32}$ is well-defined by construction. We will show that it is a bijection by constructing its inverse. Define $\Phi_{23}:\class_2\to\class_3$ in the following way. Let $p\in\class_2$. Define $P_0(\emptyset)=0$. For each nonempty cylinder $\C(A)$, define
\begin{align*}
    P_0(\C(A)) = \sum_{x\in A}p(x).
\end{align*}
In order to show that $P_0$ is well-defined, we have to show that the right-hand side is equal for every representation of the cylinder. We will prove this by induction. First, note that a cylinder has at most one representation of each rank and that if $A\subset \X^t$ represents it, then $\{(x,y)\mid x\in A,y\in\X\}\subset \X^{t+1}$ also represents it. Thus, every cylinder has a minimal rank $t_0$. Suppose the right-hand side is equal for every rank in $\{t_0,\ldots,t\}$. Then we will show that it is equal for rank $t+1$ as well. Observe,
\begin{align*}
    \sum_{x\in A}\sum_{y\in\X}p(x,y)=\sum_{x\in A}p(x).
\end{align*}
Thus, by induction, $P_0$ is well-defined. Next, we will show that $P_0$ is a probability measure on $\CC_0$. We already have $P_0(\emptyset)=0$. To get $P_0(\Omega)=1$ we can use $\Omega=\C(\X)$ and $\sum_{x\in\X}p(x)=1$. So the only thing left to show is that $P_0$ is countably additive. But, by Theorem 2.3 of \cite{Billingsley}, the class of cylinders has the special property that finitely additive probability measures on it are countably additive as well. Thus, we only need to show that $P_0$ is finitely additive. Let $\C(A)$ and $\C(B)$ be disjoint. Since every cylinder of rank $t_0$ is also a cylinder of rank $t$ for all $t\ge t_0$, we can choose $A,B$ to be subsets from the same $\X^t$. Thus, $A$ and $B$ are disjoint and $\C(A)\cup\C(B)=\C(A\cup B)$. So,
\begin{align*}
    P_0(\C(A\cup B)) &= \sum_{x\in A\cup B}p(x)\\
    &=\sum_{x\in A}p(x)+\sum_{x\in B}p(x)\\
    &=P_0(\C(A))+P_0(\C(B)),
\end{align*}
proving that $P_0$ is finitely additive, hence countably additive, hence a probability measure on $\CC_0$. So, by the Extension Theorem \cite{Fréchet}, $P_0$ has a unique extension to a probability measure $P$ on $\CC$. Define $\Phi_{23}(p)=P$.

It is obvious that $(\Phi_{32}\circ \Phi_{23})(p)=p$ for all $p\in\class_2$. To prove that $(\Phi_{23}\circ \Phi_{32})(P)=P$ for all $P\in\class_3$, on the other hand, is quite subtle. Let $P_1\in\class_3$. Define $p=\Phi_{32}(P_1)$ and $P_2=\Phi_{23}(p)$. While the uniqueness part of the Extension Theorem allowed us to define $\Phi_{23}$, it does not guarantee that $P_1=P_2$. To prove the latter, we will use 
Theorem 3.3 of \cite{Billingsley}, a consequence of Dynkin's $\pi$-$\lambda$ theorem, which can be used to prove the uniqueness part of the Extension Theorem as well. Let $\G$ be the class of the empty set and thin cylinders. It is easy to check that $\G$ is closed under binary intersections and is, therefore, a $\pi$-system. Also, every cylinder can be decomposed into a finite union of thin cylinders. Thus, the algebra generated by $\G$ is $\CC_0$, hence the $\sigma$-algebra generated by $\G$ is $\CC$. So, by Theorem 3.3 of \cite{Billingsley}, any two probability measures on $\CC$ that agree on $\G$ also agree on $\CC$. Let $x\in\cup_{t=1}^\infty\X^t$. Then $P_2(\C(\{x\}))=p(x)=P_1(\C(\{x\}))$. So, $P_1$ and $P_2$ agree on $\G$, and so they are equal. Thus, $\Phi_{23}$ is the inverse of $\Phi_{32}$.
\end{proof}

Note that the mapping $\Phi_{23}$ is standard. It follows the steps in \cite{Billingsley} almost exactly until the definition of $P_0$ where \cite{Billingsley} instead defines the product measure, which is the independent version of $P_0$. But, in the dependent setting, our $P_0$ is also standard, see e.g. Eq.~(8) of \cite{du2023measure}, as it is the only possible definition that satisfies finite additivity since $\C(A)=\cup_{x\in A}\C(\{x\})$. The final step of extending $P_0$ to $P$ also follows as in \cite{Billingsley}. But the subtle point that we have made in Proposition~\ref{thm:secondthirdkind} is that for every $p\in\class_2$ there is a \textit{unique} $P\in\class_3$ such that $P(\C(\{x\}))=p(x)$ for all $x\in\cup_{t=1}^\infty\X^t$. While the Extension Theorem guarantees that the extension from $\CC_0$ to $\CC$ is unique, which implies that $\Phi_{23}$ is well-defined, it does not prove that the extension from $\G$ to $\CC$ is unique. So, while previous works showed that $\Phi_{23}$ is well-defined, they did not seem to notice that it is a bijection.

Propositions~\ref{thm:firstsecondkind} and~\ref{thm:secondthirdkind} have important philosophical implications. Specifically, assuming that the next word in a sequence of words is always governed by a probability distribution is equivalent to assuming that infinite sequences of words are governed by a distribution. Moreover, the underlying functions in the two cases are in one-to-one correspondence with one another. But, it is conceptually easier to believe that there is an underlying probabilistic language space of the first kind because we naturally think of a sequence of words along with its extra-mathematical context, such as body language and flowers along the road. On the other hand, an underlying probabilistic language space of the third kind implies that language is a stochastic process. To see this, note that every probability measure is the pushforward measure of some random variable: just let $X$ be the identity mapping from $(\Omega,\F)$ to itself, then $PX^{-1}=P$. In particular, for every $P\in\class_3$, there is a probability space $(\Omega,\F,\mu)$ and a $(\X^\N,\CC)$-valued random variable $X$ on $(\Omega,\F,\mu)$ such that $P=\mu X^{-1}$. Each $X_t$ is a $(\X,2^\X)$-valued random variable, so $X$ is a stochastic process. Furthermore, its finite-dimensional distributions $\mu (X_1,\ldots,X_t)^{-1}$ are given by the probability mass function $\Phi_{32}(P)$ for each $t\in\N$.

\cmt{
For the rest of the paper, we will implicitly assume an underlying stochastic sequence while only using the lower-case $p$ for fitting a model to it. Thus, we can think of $p(x)$ as the probability that the stochastic sequence up to index $|x|$ is $x$, i.e. $p(x)=P(\C(\{x\}))$. And we can think of $p(y\mid x)$ as the probability that the value of the stochastic sequence at index $|x|+1$ is $y$ conditioned on the stochastic sequence up to index $|x|$ being $x$, i.e. $p(y\mid x)=P(\C(\{(x,y)\}))/P(\C(\{x\}))$. Here, we are making the usual abuse of notation where $p$ is used to denote both $p$ and $\Phi_{12}(p)$. We do this because the $|$ makes it clear whether the function is in $\class_1$ or $\class_2$. To summarize, if we start with or define $p\in\class_1$, then we use $p$ to denote $\Phi_{12}(p)$ as well, and we can interpret the meaning of both in terms of $\class_3$.
}

\section{Next-token Prediction Capacity}
\label{sec:nexttoken}

From here on out, we will consider the finite set in our probabilistic language space to be $[\omega]=\{1,\ldots,\omega\}$ for some $\omega\in\N$. We call $[\omega]$ a vocabulary (it can also be called an alphabet) and we call its elements tokens. The tokens may represent letters, words, or something in between, as in Byte Pair Encoding \cite{gage1994new}. We assume that there is an underlying probabilistic language space of the first kind, $([\omega],A,p)$, but that we are only given the values of $p$ on $[\omega]\times\{\alpha^1,\ldots,\alpha^n\}\subset [\omega]\times A$ for some $n\in\N$.
We call each $\alpha^i$ a context and we call $[p(\gamma\mid\alpha)]_{\gamma\in[\omega]}\in\Delta^{\omega-1}$ its corresponding next-token distribution. The task of next-token prediction is for a model of the form $\cup_{t=0}^\infty [\omega]^t\to\ri\Delta^{\omega-1}$ to correctly predict next-token distributions given contexts (this implies that if any of the next-token distributions come from the relative boundary of $\Delta^{\omega-1}$, then the model can still approximate them up to arbitrary precision). In particular, the largest $n$ such that it can do so exactly is what we call the next-token prediction capacity. We will formally define this, then prove an upper bound.

\begin{definition}
\label{def:ntpc}
Let $k,\omega\in\N$. For all $\theta\in\R^k$, let
\begin{align*}
    h_\theta:\cup_{t=0}^\infty[\omega]^t\to\ri\Delta^{\omega-1}.
\end{align*}
Then the \textbf{next-token prediction capacity} of the hypothesis class $\{h_\theta\mid\theta\in\R^k\}$ is the largest $n\in\N$ such that for every distinct list $\alpha^1,\ldots,\alpha^n\in\cup_{t=0}^\infty[\omega]^t$ and every list $y_1,\ldots,y_n\in\ri\Delta^{\omega-1}$ there exists $\theta\in\R^k$ such that $y_i=h_\theta(\alpha^i)~\forall i\in[n]$.
\end{definition}

\cmt{
\begin{example}
\label{ex1}
Let 1 represent `I,' 2 represent `love,' 3 represent `hate,' 4 represent `cats,' and 5 represent `dogs.' Let $\alpha^1=(~)$, $\alpha^2=(1)$, $\alpha^3=(1,2)$, and $\alpha^4=(1,3)$. Let $y_1=(1,0,0,0,0)$, $y_2=(0,2/3,1/3,0,0)$, $y_3=(0,0,0,1/2,1/2)$, and $y_4=(0,0,0,1,0)$. This fully defines the first three terms of a stochastic sequence. The first term is `I' with probability 1. The second term is `love' with probability 2/3, `hate' otherwise. If the first two terms are `I' and `love,' then the third term is `cats' with probability 1/2, `dogs' otherwise. If the first two terms are `I' and `hate,' then the third term is `cats' with probability 1.
\end{example}
}

\begin{lemma}
\label{lma:sard}
Let $k,\omega\in\N$. For all $\theta\in\R^k$, let
\begin{align*}
    h_{\theta}:\seqso\to \ri\Delta^{\omega-1}.
\end{align*}
Assume that $\theta\mapsto h_{\theta}(\alpha)$ is continuously differentiable for all $\alpha\in\seqso$.
Then the next-token prediction capacity of $\{h_{\theta}\mid\theta\in\R^k\}$ is less than or equal to $k/(\omega-1)$.
\end{lemma}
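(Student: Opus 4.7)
The plan is to reduce the claim to a direct application of Sard's theorem on the smooth manifold $(\ri\Delta^{\omega-1})^n$. Fix any $\tau_1,\ldots,\tau_n\in[\tdoc]$ and any distinct contexts $\alpha^1\in[\omega]^{\tau_1},\ldots,\alpha^n\in[\omega]^{\tau_n}$, and define
\[
F:\R^k\to(\ri\Delta^{\omega-1})^n,\qquad F(\theta)=\bigl[(\phi\circ h_\theta)(\alpha^1)\mid\cdots\mid(\phi\circ h_\theta)(\alpha^n)\bigr].
\]
Because $\phi$ is smooth with image in $\ri\Delta^{\omega-1}$ and each $\theta\mapsto h_\theta(\alpha^i)$ is $C^1$ by hypothesis, $F$ is a $C^1$ map between $C^1$ manifolds. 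Crucially, the codomain is an open subset of the $n$-fold product of affine hyperplanes $\{x\in\R^\omega:\sum_i x_i=1\}$, so it has dimension exactly $n(\omega-1)$.

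Now suppose toward a contradiction that the memory capacity is some $n>k/(\omega-1)$, i.e.\ $k<n(\omega-1)$. Then for the particular choice of contexts above, Definition~\ref{def} forces $F(\R^k)=(\ri\Delta^{\omega-1})^n$. On the other hand, the differential $dF_\theta$ factors through $\R^k$, so $\rank dF_\theta\le k<n(\omega-1)=\dim(\ri\Delta^{\omega-1})^n$ at every $\theta\in\R^k$. Hence every point of $\R^k$ is a critical point of $F$, and every point of the image is a critical value. By Sard's theorem, the set of critical values has Lebesgue measure zero in the $n(\omega-1)$-dimensional manifold $(\ri\Delta^{\omega-1})^n$, while the full codomain has positive measure. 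This contradicts surjectivity, so we must have $n\le k/(\omega-1)$.

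There is not really a hard step here; the proof is essentially bookkeeping once one realizes that $\ri\Delta^{\omega-1}$ is an $(\omega-1)$-dimensional manifold (not $\omega$-dimensional) and that this dimension deficit is what makes $\omega-1$ (rather than $\omega$) the correct normalization. The only thing to be slightly careful about is the quantifier structure in Definition~\ref{def}: it suffices to exhibit one choice of distinct contexts $\alpha^1,\ldots,\alpha^n$ for which the equality of sets fails, and any choice works uniformly because the Sard argument does not depend on which contexts were selected.
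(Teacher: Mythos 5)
Your proof is correct and takes essentially the same approach as the paper: both apply Sard's theorem to the $C^1$ map $F:\R^k\to(\ri\Delta^{\omega-1})^n$, use that $\ri\Delta^{\omega-1}$ is an $(\omega-1)$-dimensional manifold, and conclude that $k<n(\omega-1)$ forces the image to have measure zero, contradicting surjectivity. The only difference is that you spell out the rank argument for why every point is critical, which the paper leaves implicit.
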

\begin{proof}
Let $\alpha^1,\ldots,\alpha^n\in\seqs$ be distinct. Define $F:\R^k\to(\ri\Delta^{\omega-1})^n:\theta\mapsto [h_{\theta}(\alpha^i)]_{i\in[n]}$. $F$ is continuously differentiable, so we can apply Sard's theorem \cmt{(Lemma~\ref{lma:sard1941})}. Note that $\ri\Delta^{\omega-1}$ is an $(\omega-1)$-dimensional manifold. Thus, if $k<n(\omega-1)$, then the image of $F$ has measure zero \cmt{in $(\ri\Delta^{\omega-1})^n$}, hence $F$ is not surjective, proving the theorem.
\end{proof}

In practice, we are generally given a corpus of $N$ document sequences $\beta^1,\ldots,\beta^{\ndoc}\in\seqs$. We can think of these as finite-time samples of the underlying stochastic process. Note that each $\beta^j$ has $|\beta^j|$ contexts $(),(\beta_1^j),(\beta_1^j,\beta_2^j),\ldots$. But, among the corpus, there may be repeated contexts, so we enumerate $\{(\beta_1^j,\ldots,\beta_t^j)\mid j\in[N],t< |\beta^j|\}$ as $\{\alpha^1,\ldots,\alpha^n\}$. We call $\alpha^1,\ldots,\alpha^n$ the unique contexts of the corpus. For all $\alpha\in\seqso$, define $c(\alpha)=|\{j\in[N]\mid \alpha=(\beta_1^j,\ldots,\beta_{|\alpha|}^j)\}|$. This is the number of times that the context $\alpha$ appears in the corpus. In particular, $c(~)=N$. Now we are ready to define the underlying $p\in\class_1$ (where $\class_1$ is as defined in Proposition~\ref{thm:firstsecondkind}) which we denote by $\hat{p}$ to indicate that it comes from a corpus: for all $\gamma\in[\omega]$, for all $\forall \alpha\in\seqso\text{ s.t. }c(\alpha)\neq 0$,
\begin{align}
\label{eq:empirical}
    &\hat{p}(\gamma\mid\alpha) = c(\alpha,\gamma)/c(\alpha).
\end{align}
By the usual abuse of notation, let $\hat{p}$ denote $\Phi_{12}(\hat{p})$ (where $\Phi_{12}$ is as defined in Proposition~\ref{thm:firstsecondkind}). Then $\hat{p}$ satisfies $\hat{p}(\alpha)=c(\alpha)/N~\forall \alpha\in\seqs$.

Given another $q\in\class_1$, we can define the loss function
\begin{align}
\label{eq:crossentropy}
    \ls(\beta^1,\ldots,\beta^N,q) &= -\sum_{j=1}^N\sum_{t=1}^{|\beta^j|}\log q(\beta_t^j\mid\beta_1^j,\ldots,\beta_{t-1}^j).
\end{align}
To see how it relates to the usual notion of the cross-entropy of a probability mass function $q:\X\to[0,1]$ relative to another probability mass function $p:\X\to[0,1]$, $H(p,q)\coloneqq-\sum_{x\in\X}p(x)\log q(x)$, let $T$ denote $\max_{j\in[N]}|\beta^j|$ and observe
\begin{align*}
    \ls(\beta^1,&\ldots,\beta^N,q)\\
    &= -\sum_{t=1}^T \sum_{\alpha\in[\omega]^{t-1}}\sum_{\gamma=1}^\omega c(\alpha,\gamma)\log q(\gamma\mid \alpha)\\
    &= -N\sum_{t=1}^T \sum_{\alpha\in[\omega]^{t-1}}\sum_{\gamma=1}^\omega \hat{p}(\alpha,\gamma)\log q(\gamma\mid \alpha)\\
    &= -\sum_{t=1}^T \sum_{\alpha\in[\omega]^{t-1}}c(\alpha)\sum_{\gamma=1}^\omega \hat{p}(\gamma\mid\alpha)\log q(\gamma\mid \alpha)\\
    &= \sum_{t=1}^T \sum_{\alpha\in[\omega]^{t-1}}c(\alpha)H(\hat{p}(\cdot\mid\alpha), q(\cdot\mid \alpha)).
\end{align*}
The second equality shows that $\ls$ is the sum of the $t$-gram entropies, as defined in Shannon \cite{shannon1948mathematical}, up to $t=T-1$. The last equality shows that $\ls$ is a positively weighted sum of entropies.
Thus, by Gibb's inequality,
\begin{align}
\label{eq:gibbs}
    \ls(\beta^1,\ldots,\beta^{\ndoc},\hat{p})\le \ls(\beta^1,\ldots,\beta^{\ndoc},q)
\end{align}
with equality if and only if $\hat{p}=q$.
This makes the connection between optimization and interpolation sufficiently clear to make the following definition, from which follows a lower bound.

\begin{definition}
\label{def:entpc}
Let $k,\omega\in\N$. For all $\theta\in\R^k$, let
\begin{align*}
    h_\theta:\seqso\to\ri\Delta^{\omega-1}.
\end{align*}
Then the \textbf{empirical next-token prediction capacity} of the hypothesis class $\{h_\theta\mid\theta\in\R^k\}$ is the largest $n\in\N$ such that for every $N\in\N$ and every list $\beta^1,\ldots,\beta^N\in\cup_{t=1}^\infty [\omega]^t$ such that $|\{(\beta_1^j,\ldots,\beta_t^j)\mid j\in[N],t<|\beta^j|\}|\le n$ the following holds:
\begin{align*}
    \inf_{\theta\in\R^k}\ls(\beta^1,\ldots,\beta^N,h_\theta)=\ls(\beta^1,\ldots,\beta^N,\hat{p})
\end{align*}
where $\hat{p}$ and $\ls$ are as defined in Equations~\eqref{eq:empirical} and ~\eqref{eq:crossentropy} respectively.
\end{definition}

\cmt{
\begin{example}
\label{ex2}
Consider a text corpus made up of three documents, `I love cats,' `I love dogs,' and `I hate cats.' If we ignore spaces and tokenize as in Example~\ref{ex1}, then we get $\beta^1=(1,2,4)$, $\beta^2=(1,2,5)$, and $\beta^3=(1,3,4)$. Then $\hat{p}(\cdot\mid \alpha^i)=y_i$ for each $i\in[4]$, where the $\alpha^i$ and $y_i$ are as defined in Example~\ref{ex1}. In this case, $\ls(\beta^1,\beta^2,\beta^3,\hat{p})=3\log(3)$.
\end{example}
}

\begin{lemma}
\label{lma:entropy}
Let $k,\omega\in\N$. For all $\theta\in\R^k$, let
\begin{align*}
    h_\theta:\seqso\to\ri\Delta^{\omega-1}.
\end{align*}
Then the next-token prediction capacity of $\{h_\theta\mid \theta\in\R^k\}$ is less than or equal to the empirical next-token prediction capacity.
\end{lemma}

Thus, any lower bound on the next-token prediction capacity of a model is a lower bound on the empirical next-token prediction capacity of the model as well. On the other hand, we do not yet have a general upper bound on the empirical next-token prediction capacity, so we prove one in the following theorem. As $N$ increases, we show that $[[\hat{p}(\gamma\mid \alpha^i)]_{\gamma\in[\omega]}]_{i\in[n]}$ can approximate any $Y\in(\Delta^{\omega-1})^n$. But, this is not enough to apply Lemma~\ref{lma:sard} since the image of $F$ in the proof, despite being measure zero, may still be dense. To guarantee that it is not dense, we have to further assume that $\theta\mapsto h_{\theta}(\alpha)$ is $(\ran,\exp)$-definable for all $\alpha\in\seqso$.
\cmt{As explained in Section~\ref{sec:prelims}, 
$\phi$, tanh, arctan, erf, and GELU are $(\ran,\exp)$-definable. Furthermore, if $h_\theta$ is the transformer model defined in the next section, and if the activation function is a $(\ran,\exp)$-definable function, such as tanh, arctan, or GELU, then $h_\theta$ is $(\ran,\exp)$-definable.
}

\begin{theorem}
\label{thm:sontag}
Let $k,\omega\in\N$. For all $\theta\in\R^k$, let
\begin{align*}
    h_{\theta}:\seqso\to \ri\Delta^{\omega-1}.
\end{align*}
Assume that $\theta\mapsto h_{\theta}(\alpha)$ is continuously differentiable and $(\ran,\exp)$-definable for all $\alpha\in\seqso$.
Then the empirical next-token prediction capacity of $\{h_\theta\mid\theta\in\R^k\}$ is less than or equal to $(2+1/(\omega-1))k/(\omega-1)+2$.
\end{theorem}
\begin{proof}
Set $n'=\lfloor k/(\omega-1)+1\rfloor$ and $\tdoc=\lceil \log(n')/\log(\omega)\rceil$. Define $\hat{p}$ and $\ls$ as in Eqs.~\eqref{eq:empirical} and~\eqref{eq:crossentropy} respectively.
We will show that there exists $N\ge n'$ and $\beta^1,\ldots,\beta^N\in[\omega]^{N+1}$ such that the number of unique contexts is less than or equal to $(2+1/(\omega-1))k/(\omega-1)+2$ and
\begin{align*}
\inf_{\theta\in\R^k}\ls(\beta^1,\ldots,\beta^{\ndoc},h_\theta)> \ls(\beta^1,\ldots,\beta^{\ndoc},\hat{p}).
\end{align*}
First, the size of $[\omega]^T$ is $\omega^T\ge n'\ge \omega^{T-1}$, so there exists a distinct list $\alpha^1,\ldots,\alpha^{n'}\in[\omega]^T$. Choose such an $\alpha^1,\ldots,\alpha^{n'}$. Let $\beta^1,\ldots,\beta^{n'}\in[\omega]^{T+1}$ such that $\beta^i[1:\tdoc]=\alpha^i~\forall i\in[n']$. Then we can bound the total number of unique contexts
\begin{align*}
    n &\coloneqq n' + \sum_{t=0}^{T - 1}\omega^t\\
    &=n' + \frac{\omega^{T}-1}{\omega-1}\\
    &\le n' + \frac{\omega n'-1}{\omega-1}\\
    &= \frac{2\omega-1}{\omega-1}n'-\frac{1}{\omega-1}\\
    &\le \frac{2\omega-1}{\omega-1}\left(\frac{k}{\omega-1}+1\right)-\frac{1}{\omega-1}.
\end{align*}
Define $F:\R^k\to(\ri\Delta^{\omega-1})^{n'}: \theta\mapsto [h_{\theta}(\alpha^i)]_{i\in[n']}$. $F$ is continuously differentiable and $(\ran,\exp)$-definable. Since $k<n'(\omega-1)$, the image of $F$ is measure zero \cmt{in $(\ri\Delta^{\omega-1})^{n'}$} by Sard's theorem \cmt{(Lemma~\ref{lma:sard1941})}, hence has empty interior. So, by Corollary B.4 of Sontag \cite{sontag1996critical}, it is nowhere dense, hence not dense. Let $Y\in(\Delta^{\omega-1})^{n'}\backslash \overline{F(\R^k)}$. Note that we can include more sequences $\beta^{n'+1},\ldots,\beta^N$ without adding new contexts. Furthermore, we have control over the last token in every sequence, so by increasing $N$, we can make $[[\hat{p}(\gamma\mid \alpha^i)]_{\gamma\in[\omega]}]_{i\in [n']}$ approximate $Y$ to arbitrary precision. Thus, since $Y\notin \overline{F(\R^k)}$, there exists $N\ge n'$ and $\beta^1,\ldots,\beta^N\in[\omega]^{T+1}$ for which $[[\hat{p}(\gamma\mid \alpha^i)]_{\gamma\in[\omega]}]_{i\in [n']}\notin \overline{F(\R^k)}$. So, the conclusion follows from Eq.~\eqref{eq:gibbs}.
\end{proof}

In this section, we have defined next-token prediction capacity and empirical next-token prediction capacity and upper bounded them for general models. The rest of the paper will focus on lower bounding the next-token prediction capacity of the one-layer multi-head decoder-only transformer model, which we define in the next section. Before we move on though, we end the section with remarks on inexact interpolation, finite precision, \cmt{and ``VC dimension.''}

\begin{remark}
Both Lemma~\ref{lma:sard} and Theorem~\ref{thm:sontag} apply even for inexact interpolation. While Lemma~\ref{lma:sard} only uses that $F$, as defined in the proof, is not surjective, the proof actually shows that the image of $F$ has measure zero. If $\theta\mapsto h_\theta(\alpha)$ is additionally assumed to be $(\ran,\exp)$-definable for all $\alpha\in\seqso$, as it is in Theorem~\ref{thm:sontag}, then the image of $F$ is nowhere dense. Similarly, the image of $F$ defined in the proof of Theorem~\ref{thm:sontag} is shown to be nowhere dense. So, for each choice of $k$ and $\omega$, there exists $\epsilon>0$ such that the ``$\epsilon$-approximate'' next-token prediction capacity and ``$\epsilon$-approximate'' empirical next-token prediction capacity are both upper bounded by $O(k/\omega)$.
\end{remark}

\begin{remark}
Any next-token prediction capacity lower bound for a hypothesis class will apply even if the model is constrained to finite precision. To see this, suppose we have a hypothesis class $\{h_\theta\mid\theta\in\R^k\}$ such that $\theta\mapsto h_\theta(\alpha)$ is continuously differentiable for all $\alpha\in\seqso$. If we assume a maximum context length $T$ and restrict $\theta$ to a compact set, then the $\theta\mapsto h_\theta(\alpha)$ are uniformly Lipschitz continuous over $\alpha\in\seqso$, so there exists $\delta>0$ such that any next-token prediction capacity lower bound also holds as an ``$\epsilon$-approximate'' next-token prediction capacity lower bound for the hypothesis class $\{h_\theta\mid\theta\in\R^k\}$ restricted to $\delta$-precision.
\end{remark}

\cmt{
\begin{remark}
It is interesting that the proof of Lemma~\ref{lma:sard} works for every distinct list $\alpha^1,\ldots,\alpha^n$ while in the proof of Lemma~\ref{thm:sontag} we have to construct a particular corpus $\beta^1,\ldots,\beta^N$. In fact, we will show that there is another $\beta^1,\ldots,\beta^N$ with $n$ unique contexts such that $\inf_{\theta\in\R^k}\ls(\beta^1,\ldots,\beta^N,h_\theta)=\ls(\beta^1,\ldots,\beta^N,\hat{p})$. Note that this relates to the concept of Vapnik-Chervonenkis (VC) dimension~\cite{shai2}. If we define next-token prediction VC dimension as the largest $n$ such that there exists a distinct list $\alpha^1,\ldots,\alpha^n$ such that for every list $y_1,\ldots,y_n$ there exists $\theta$ such that $y_i=h_\theta(\alpha^i)~\forall i\in[n]$ (a simple modification to Definition~\ref{def:ntpc}), then the same upper and lower bounds will hold as for next-token prediction capacity. On the other hand, if we define empirical next-token prediction VC dimension as the largest $n$ such that there exists $N$ and $\beta^1,\ldots,\beta^N$ with $n$ unique contexts such that $\inf_{\theta\in\R^k}\ls(\beta^1,\ldots,\beta^N,h_\theta)=\ls(\beta^1,\ldots,\beta^N,\hat{p})$ (a simple modification to Definition~\ref{def:entpc}), then it will be infinite.
Consider the model
$h:\cup_{t=0}^\infty[\omega]^t\to \ri\Delta^{\omega-1}:\alpha\mapsto \one/\omega$. For any $T\in\N$, we can choose our text corpus to be $[\omega]^T$. Then $n=(\omega^{T+1}-1)/(\omega-1)$ and $\ls(\beta^1,\ldots,\beta^N,h)=\ls(\beta^1,\ldots,\beta^N,\hat{p})$. Thus, if $h$ is in the hypothesis class, then the empirical next-token prediction VC dimension of the hypothesis class is infinity. Indeed, it is easy to see how to construct $h$ with the transformer model in the following section, proving that the transformer model has infinite empirical next-token prediction VC dimension.
\end{remark}
}

\section{Transformer Model}
\label{sec:transformer}

The one-layer multi-head decoder-only transformer model is parameterized by the following matrices and vectors:
\begin{itemize}
    \item Let $Z\in\R^{d\times \omega}$. This is the embedding matrix. Its $\gamma$th column corresponds to the embedding of token $\gamma$ in $\R^{d}$. We call $d$ the embedding dimension.
    \item Let $U\in\R^{d\times \N}$. This is the position matrix. While we assume it has infinite columns, it can actually start with finite columns and be extended to arbitrarily many columns by appending new columns whenever a longer input is encountered.
    \item Let $W_{1,r},W_{2,r}\in\R^{d\times d_r},W_{3,r}\in\R^{d\times d_0}~\forall r\in[m_0],W_0\in\R^{m_0d_0\times d}$. These are the self-attention matrices. We call $m_0$ the number of heads.
    \item Let $W\in\R^{d\times m},b\in\R^m,V\in\R^{m\times \omega}$. These are the hidden matrix, bias vector, and output matrix respectively. We call $m$ the number of neurons.
\end{itemize}
Additionally, let $\psi:\R\to\R$ be real analytic at a point and not a polynomial there. This is the activation function. For example, $\psi$ could be a sigmoid, such as the logistic function, tanh, or arctan, or it could be a smoothed rectified linear unit, such as GELU.

Let $\theta$ denote the concatenation of all the model parameters. Define
\begin{align}
\begin{split}
\label{eq:model}
    &h_{1,\theta}:\seqs\to \cup_{t=1}^\infty \R^{d\times t}:\alpha\mapsto Z[:,\alpha]+U[:,1:|\alpha|],\\
    &h_{2,\theta}:\cup_{t=1}^\infty \R^{d\times t}\to\R^d:X\mapsto\\
    &\hspace{.5cm} W_0^\top\tvec\left(\left[W_{3,r}^\top X\phi\left(X^\top W_{1,r}W_{2,r}^\top X[:,-1]\right)\right]_{r\in [m_0]}\right),\\
    &h_{3,\theta}:\R^d\to\R^\omega:x\mapsto V^\top\phi\left(W^\top x+b\right),\\
    &\text{and }h_\theta\coloneqq \phi\circ h_{3,\theta}\circ h_{2,\theta}\circ h_{1,\theta}
    \end{split}
\end{align}
where $-1$ denotes the last index and $\phi$ is the softmax function. We think of the value $h_{\theta}(~)$ as an additional $\omega-1$ model parameters. The first three functions are the embedding, self-attention, and FNN sub-layers respectively of the one-layer transformer $h_\theta$.
Note that we do not include the skip-connection $+X[:,-1]$ in the self-attention sub-layer. This was omitted for simplicity, but its inclusion does not affect Theorems~\ref{thm:injective} and~\ref{thm:full}, which follow by almost the same proof.

\section{Transformer Next-token Prediction Capacity}
\label{sec:memory}

In order to lower bound the next-token prediction capacity of the transformer model, we first reduce to the special case of $d=d_1=d_0=m_0=1$ and $W_{1,1}=W_{2,1}=W_{3,1}=W_0=1$. Thus, \cmt{$h_\theta$} becomes the following composition of mappings:
\begin{align*}
    \underset{\seqs}{\rule[-.3cm]{.1pt}{1cm}}\overset{f(z,u,\cdot)}{\looongrightarrow}\underset{\R}{\bm{\cdot}}\overset{\psi\left(\cdot w+b\right)}{\looongrightarrow}\underset{\R^{m}}{\rule[-1.3cm]{.1pt}{3cm}} \overset{V^\top\cdot}{\looongrightarrow}\underset{\R^{\omega}}{\rule[-.9cm]{.1pt}{2cm}}\overset{\phi}{\looongrightarrow}\underset{\ri\Delta^{\omega-1}}{\rule[-.9cm]{.1pt}{2cm}}
\end{align*}
where $f(z,u,\alpha)=\langle x,\phi(xx_{|\alpha|})\rangle$ with $x=z[\alpha]+u[1:|\alpha|]$.
\cmt{
To relate this to the previous section,
$Z=z^\top$, $U=u^\top$, $W=w^\top$, and $h_{2,\theta}\circ h_{1,\theta}=f(z,u,\cdot)$.
}
Assume $\psi:\R\to\R$ is real analytic at some $\eta\in\R$ and not a polynomial there. Assume $m\ge n$. We will show that this simplified model can interpolate arbitrary data sets of size $n$. Then we will use this to show that the full model can interpolate arbitrary data sets of size $n$.

To show the former, let $\alpha^1,\ldots,\alpha^n\in\seqs$ and $Y'\in(\ri\Delta^{\omega-1})^n$. Then there exists $Y\in\R^{\omega\times n}$ such that $\phi(Y)=Y'$ with $\phi$ applied column-wise. We will construct an interpolating model. First, set $b=\eta\one_m$. Second, define $T=\max_{i\in[n]}|\alpha^i|$ and sample $(z,u)\in\R^\omega\times\R^T$ from any continuous probability distribution. Let $\hat{x}_1,\ldots,\hat{x}_n$ be the corresponding outputs of the first mapping. Then the matrix of outputs of the composition of mappings is $\phi(V^\top\psi(w[\hat{x}_i]_{i\in[n]}^\top+b\one_n^\top))$. In Theorem~\ref{thm:injective}, we prove that the $\hat{x}_i$ are nonzero and distinct. In Theorem~\ref{thm:realrank}, we prove that this implies $\psi(w[\hat{x}_i]_{i\in[n]}^\top+b\one_n^\top)$ has full rank for generic $w$ in an open set around zero. So, as our third step in the construction, sample $w\in\R^m$ from any continuous probability distribution that has support in the open set. Then, $\psi(w[\hat{x}_i]_{i\in[n]}^\top+b\one_n^\top)$ has rank $n$ so, as our fourth and final step in the construction, solve for $V$ in $Y=V^\top\psi(w[\hat{x}_i]_{i\in[n]}^\top+b\one_n^\top)$ as a system of linear equations.

Thus, the simplified model is able to interpolate arbitrary data sets as long as $m\ge n$. To see how to reduce the full model with $d,m_0,d_0\in\N$ and $d_r\in\N~\forall r\in[m_0]$ to the simplified model, set $Z=\one_d z^\top/\sqrt{d}, U=\one_d u^\top/\sqrt{d},W_{1,r}=W_{2,r} = \one_{d}\one_{d_r}^\top/\sqrt{dd_r}~\forall r\in[m_0], W_{3,r}=\one_d\one_{d_0}^\top/\sqrt{dd_0}~\forall r\in[m_0], W_0=\one_{m_0d_0}\one_d^\top/(m_0\sqrt{dd_0}),W=\one_d w^\top/\sqrt{d}$. In other words, token $\gamma\in[\omega]$ is embedded as $z_{\gamma}\one_d/\sqrt{d}$ and the $t$th position vector is $u_t\one_d/\sqrt{d}$. So, the inputs to the self-attention sub-layer are sequences of vectors coming from a one-dimensional subspace, and so it makes sense for the self-attention matrices to be constant matrices. Finally, the $k$th hidden neuron of the FNN sub-layer has weight vector $w_k\one_d/\sqrt{d}$. By reducing to the simplified model in this way, and applying Theorems~\ref{thm:injective} and~\ref{thm:realrank}, we are able to lower bound the next-token prediction capacity of the one-layer multi-head decoder-only transformer model by $m$ in Theorem~\ref{thm:full}.

To summarize the proof sketch, each sub-layer has a particular role in the proof and these roles are:
\begin{enumerate}
    \item the embedding and self-attention sub-layers map distinct sequences to distinct vectors,
    \item the hidden FNN layer maps these vectors to a larger space where they are linearly independent,
    \item the outer FNN layer is solved as a system of linear equations.
\end{enumerate}

\subsection{Injectivity of Self-attention}

To prove that the embedding and self-attention layers can take on the first role in the proof sketch, we have the following theorem.

\begin{theorem}
\label{thm:injective}
Let $\omega,T\in\N$. Let $\phi$ be the softmax function. Define
\begin{align*}
    f:~&\R^{\omega}\times \R^{T }\times\cup_{t=1}^{T }[\omega]^t\to\R\\
    &(z,u,\alpha)\mapsto \langle x,\phi(xx_{|\alpha|})\rangle\\
    &\text{where }x=z[\alpha]+u[1:|\alpha|].
\end{align*}
Then $f(z,u,\cdot)$ maps injectively into $\R\backslash\{0\}$ for generic $(z,u)$.
\end{theorem}
\begin{proof}
First, let $A$ denote $\cup_{t=1}^{T }[\omega]^t$. Then, note that, for all $z\in\R^{\omega},u\in\R^{T },\tau\in[T ],\alpha\in[\omega]^{\tau}$,
\begin{align*}
    f(z,u,\alpha) = \frac{\sum_{t=1}^{\tau}(z_{\alpha_t}+u_t)\exp((z_{\alpha_t}+u_t)(z_{\alpha_{\tau}}+u_{\tau}))}{\sum_{t=1}^{\tau}\exp((z_{\alpha_t}+u_t)(z_{\alpha_{\tau}}+u_{\tau}))}.
\end{align*}
Clearly, $f(\cdot,\cdot,\alpha)$ is real analytic for all $\alpha \in A$, hence
\begin{align*}
    g\coloneqq \left(\prod_{\alpha\in A}f(\cdot,\cdot,\alpha)\right)\left(\prod_{\alpha,\beta\in A:\alpha\neq \beta}(f(\cdot,\cdot,\alpha)-f(\cdot,\cdot,\beta))\right)
\end{align*}
is real analytic. Note that the set of $(z,u)$ for which $f(z,u,\cdot)$ does not map injectively into $\R\backslash\{0\}$ is precisely the zero set of $g$. Thus, the result follows if and only if $g$ is nontrivial if and only if (1) each $f(\cdot,\cdot,\alpha)$ is nontrivial and (2) each $f(\cdot,\cdot,\alpha)-f(\cdot,\cdot,\beta)$ is nontrivial. To prove (1), observe that $f(\one_{\omega},0,\alpha)=1~\forall \alpha\in A$. Now, we will go about proving (2). First, let $\alpha,\beta\in A$. We want to show that $\alpha\neq \beta$ implies $f(\cdot,\cdot,\alpha)-f(\cdot,\cdot,\beta)$ is nontrivial. We will prove the contrapositive. Suppose $f(\cdot,\cdot,\alpha)-f(\cdot,\cdot,\beta)\equiv 0$. First, we will show that $\alpha$ and $\beta$ have the same length, then we will show that each entry of $\alpha-\beta$ is zero.

To show that $\alpha$ and $\beta$ have the same length, set $z=0$ and $u=(1,\ldots,T )$, and define the polynomials
\begin{align*}
    &p_{\gamma} = \sum_{t=1}^{|\gamma|}tx^{t|\gamma|}\text{ and }q_{\gamma} = \sum_{t=1}^{|\gamma|}x^{t|\gamma|}~\forall \gamma\in\{\alpha,\beta\}.
\end{align*}
Note that the monomial $x^{|\alpha|^2+|\beta|^2}$ has coefficient $|\alpha|-|\beta|$ in $p_{\alpha}q_{\beta}-p_{\beta}q_{\alpha}$. Also, for all $\gamma\in\{\alpha,\beta\}$, $f(z,u,\gamma)=p_{\gamma}(\exp(1))/q_{\gamma}(\exp(1))$. So, since $f(\cdot,\cdot,\alpha)-f(\cdot,\cdot,\beta)\equiv 0$, $(p_{\alpha}q_{\beta}-p_{\beta}q_{\alpha})(\exp(1))=0$. But, $\exp(1)$ is transcendental, so $p_{\alpha}q_{\beta}-p_{\beta}q_{\alpha}\equiv 0$. Thus, $|\alpha|=|\beta|$.

Let $\tau=|\alpha|=|\beta|$. Let $s\in[\tau]$. To show that $\alpha_s=\beta_s$, set $z=(1,\ldots,\omega)$ and $u=\omega e_s$, and define the polynomials
\begin{align*}
    &p_{\gamma} = \sum_{t=1}^{\tau}(\gamma_t+\omega\delta_{t,s})x^{(\gamma_t+\omega\delta_{t,s})(\gamma_{\tau}+\omega\delta_{\tau,s})}\\
    \text{and }&q_{\gamma} = \sum_{t=1}^{\tau}x^{(\gamma_t+\omega\delta_{t,s})(\gamma_{\tau}+\omega\delta_{\tau,s})}~\forall \gamma\in\{\alpha,\beta\}.
\end{align*}
Note that the monomial $x^{(\alpha_s+\omega)(\alpha_{\tau}+\omega\delta_{\tau,s})+(\beta_s+\omega)(\beta_{\tau}+\omega\delta_{\tau,s})}$ has coefficient $\alpha_s-\beta_s$ in $p_{\alpha}q_{\beta}-p_{\beta}q_{\alpha}$. Also, $f(z,u,\gamma)=p_{\gamma}(\exp(1))/q_{\gamma}(\exp(1))~\forall \gamma\in\{\alpha,\beta\}$. So, again, $p_{\alpha}q_{\beta}-p_{\beta}q_{\alpha}\equiv 0$. Thus, $\alpha_s=\beta_s$. Since this was for arbitrary $s\in[\tau]$, we get that $\alpha=\beta$, proving the theorem.
\end{proof}

After assuming $f(\cdot,\cdot,\alpha)\equiv f(\cdot,\cdot,\beta)$, the main idea of the rest of the proof was that
\begin{align*}
    &\frac{\sum_{t=1}^{\tau}(z_{\alpha_t}+u_t)\exp((z_{\alpha_t}+u_t)(z_{\alpha_{\tau}}+u_{\tau}))}{\sum_{t=1}^{\tau}\exp((z_{\alpha_t}+u_t)(z_{\alpha_{\tau}}+u_{\tau}))}\\
    &\hspace{.5cm}=\frac{\sum_{t=1}^{\tau}(z_{\beta_t}+u_t)\exp((z_{\beta_t}+u_t)(z_{\beta_{\tau}}+u_{\tau}))}{\sum_{t=1}^{\tau}\exp((z_{\beta_t}+u_t)(z_{\beta_{\tau}}+u_{\tau}))}
\end{align*}
for all $(z,u)$, so when we multiply denominators and move everything to one side, we get that $\exp(1)$ is the root of a polynomial. Since $\exp(1)$ is transcendental, the polynomial is the zero polynomial, so each monomial has coefficient equal to zero. Thus, the idea is to choose $(z,u)$ such that the coefficient of a particular monomial being zero is precisely what we are trying to show. Moreover, we can do this for as many $(z,u)$'s as we want. First, we set $z=0$ and $u=(1,\ldots,T)$, targeting the lengths of $\alpha$ and $\beta$. Here, it makes sense that only the positional embedding is needed. Then, we set $z=(1,\ldots,\omega)$ and $u=\omega e_s$, using the positional embedding to target position $s$. These choices of $(z,u)$ turn out to be sufficient to show that $\alpha=\beta$. 

\begin{remark}
It is quite interesting that we needed the transcendence of $\exp(1)$ for the proof and it is natural to wonder what would happen if we replaced $\phi$ with the function $x\mapsto a^x/\sum_{\gamma=1}^{\omega}a^{x_{\gamma}}$ for some $a>1$ which is not transcendental. In this case, since $a^b=\exp(b\log(a))$, we would just need to scale our example $(z,u)$'s by $\sqrt{\log(a)}$ to get the same next-token prediction capacity result. So, the important aspect of self-attention that leads to its injectivity is that $\phi$ has the form $x\mapsto a^x/\sum_{\gamma=1}^{\omega}a^{x_{\gamma}}$, not that the base happens to be $\exp(1)$.
\end{remark}

It is also natural to wonder what is the simplest alternative to self-attention that retains injectivity. We will explore this question later in Section~\ref{sec:tokenavg}, but for now we continue with the next-token prediction capacity of the transformer model.

\subsection{Rank Results}

To prove that the hidden FNN layer can take on the second role in the proof sketch, we have the following two theorems. In the first theorem, we derive the generic rank of $\psi(ab^\top)$ for polynomial $\psi$. In the second theorem, we extend to $\psi$ which are real analytic at zero and not a polynomial there.

\begin{theorem}
\label{thm:polyrank}
Let $K\subset \N\cup\{0\}$ and $c_k\in\R\backslash\{0\}~\forall k\in K$. Define $\psi(x)=\sum_{k\in K}c_kx^k$. Let $b\in \R^n$ have entries which are nonzero and distinct. Then the rank and Kruskal rank of $\psi(ab^\top)$ are $\min\{m,n,|K|\}$ for generic $a\in\R^m$.
\end{theorem}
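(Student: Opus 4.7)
The plan is to factor $\psi(ab^\top)$ as the product of two generalized Vandermonde matrices sandwiching a diagonal, and then argue via Cauchy-Binet. Writing $K = \{k_1 < \cdots < k_{|K|}\}$, set $A = [a^{(k_j)}]_{j \in [|K|]} \in \R^{m \times |K|}$, $B = [b^{(k_j)}]_{j \in [|K|]} \in \R^{n \times |K|}$, and $D = \diag([c_{k_j}]_{j \in [|K|]})$, so that $\psi(ab^\top) = A D B^\top$. This immediately yields the upper bound $\rank \psi(ab^\top) \le r := \min\{m, n, |K|\}$, which also caps the Kruskal rank. For the matching lower bound on the Kruskal rank, it suffices to exhibit, for every $\beta \in \binom{[n]}{r}$, some $\alpha \in \binom{[m]}{r}$ for which the map $f_{\alpha,\beta}(a) := \det \psi(ab^\top)[\alpha, \beta]$ is a nontrivial polynomial in $a$: then the zero set of each such polynomial is closed and measure zero, so the finite union over $\beta$ has measure zero, and on its complement every $r$-column submatrix has an $r \times r$ nonzero minor, forcing both rank and Kruskal rank to equal $r$.

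The core computation is Cauchy-Binet applied to $A[\alpha,:] \cdot D \cdot B[\beta,:]^\top$, which gives
$$f_{\alpha,\beta}(a) = \sum_{\gamma \in \binom{K}{r}} \Bigl(\prod_{k \in \gamma} c_k\Bigr) \det A[\alpha, \gamma] \cdot \det B[\beta, \gamma].$$
Each $\det A[\alpha, \gamma]$ is a generalized Vandermonde in $a_{\alpha_1}, \ldots, a_{\alpha_r}$; by the Leibniz formula its monomials have multidegrees that are permutations of the sorted exponent sequence of $\gamma$. Consequently $\det A[\alpha, \gamma]$ and $\det A[\alpha, \gamma']$ share no monomials when $\gamma \ne \gamma'$, so $\{\det A[\alpha, \gamma]\}_{\gamma \in \binom{K}{r}}$ is linearly independent in the polynomial ring $\R[a_{\alpha_1}, \ldots, a_{\alpha_r}]$. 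Because all $c_k \ne 0$, $f_{\alpha,\beta}$ is a nontrivial polynomial as soon as at least one coefficient $\det B[\beta, \gamma]$ is nonzero.

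The main obstacle is thus to verify that the $r \times |K|$ generalized Vandermonde block $B[\beta,:]$ has rank $r$ whenever $b$ has distinct nonzero entries. The cleanest route is to factor the leading power $b_{\beta_i}^{k_1}$ (nonzero by assumption) out of row $i$, which reduces the rank question to a generalized Vandermonde in $b_{\beta_1}, \ldots, b_{\beta_r}$ with exponent set $\{k_j - k_1\}_{j \in [|K|]}$ containing $0$, and then to select a distinguished $\gamma_0 \in \binom{K}{r}$ whose Leibniz expansion admits a leading monomial that is not cancelled even when the $b_{\beta_i}$ have mixed signs (so that accidental coincidences like $b_{\beta_i}^k = b_{\beta_j}^k$ for even $k$ do not drop the rank). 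Once some $\det B[\beta, \gamma_0] \ne 0$ is produced, $f_{\alpha,\beta}$ is a nontrivial polynomial in $a$, and the measure-zero / real-analytic zero set machinery recalled in the preliminaries gives the conclusion that rank and Kruskal rank both equal $r$ generically in $a$.
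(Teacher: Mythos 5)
Your proposal follows essentially the same blueprint as the paper's proof: expand $\psi(ab^\top)$ via its monomials, apply the Cauchy--Binet formula to each $r\times r$ minor to get a sum over size-$r$ subsets $\gamma$ of $K$, use the Leibniz formula to show the $a$-side generalized Vandermonde determinants $\det A[\alpha,\gamma]$ are linearly independent as polynomials in $a$, and conclude that the minor is a nontrivial polynomial in $a$ as soon as at least one $b$-side coefficient $\det B[\beta,\gamma]$ is nonzero. The route you take to the Kruskal rank (for every $\beta$, produce one $\alpha$ with a generically nonvanishing minor) is a mild repackaging of the paper's ``fix arbitrary $I,J$'' argument and is fine.

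Where you genuinely diverge is in recognizing that the $b$-side condition is a real obstacle. The paper flatly asserts that all the $q_S=\det B[\beta,\gamma]$ are nonzero ``since they are minors of a Vandermonde matrix with roots which are nonzero and distinct.'' That assertion is false: non-contiguous minors of a Vandermonde matrix with mixed-sign roots can vanish. You correctly flag this, noting the obstruction posed by coincidences like $b_{\beta_i}^k=b_{\beta_j}^k$ for even $k$.

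However, the step you label ``the main obstacle'' is left as a sketch (``select a distinguished $\gamma_0$ whose Leibniz expansion admits a leading monomial that is not cancelled even when the $b_{\beta_i}$ have mixed signs''), and in fact it cannot be completed under the stated hypotheses. Take $m=n=2$, $K=\{0,2\}$, $c_0=c_2=1$ so $\psi(x)=1+x^2$, and $b=(1,-1)$. Then $b$ has nonzero distinct entries, yet $b^{(0)}=b^{(2)}=(1,1)$, so $B=[b^{(0)}\,|\,b^{(2)}]$ is the all-ones $2\times 2$ matrix and every $\det B[\beta,\gamma]$ vanishes; correspondingly, the two columns of $\psi(ab^\top)$ are identical for every $a$, giving rank $1$ rather than the claimed $\min\{m,n,|K|\}=2$. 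So the distinguished $\gamma_0$ you seek need not exist, your argument does not close the gap, and neither does the paper's. Under the hypotheses as written the statement fails; one needs something extra (e.g.\ $b$ entrywise positive so the Vandermonde block is totally positive, or an assumption ruling out parity coincidences between the exponents in $K$ and sign patterns of $b$) before either proof can be repaired.
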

\begin{proof}
Observe,
\begin{align*}
    \psi\left(ab^\top\right) &= \sum_{k\in K}c_k\left(ab^\top\right)^{(k)} = \sum_{k\in K}c_ka^{(k)}b^{(k)\top}.
\end{align*}
Let $s=\min\{m,n,|K|\}$. Let $I\subset [m]$ and $J\subset [n]$ such that $|I|=|J|=s$. Enumerate $I$ as $\{i_1,\ldots,i_s\}$ and $J$ as $\{j_1,\ldots,j_s\}$. We want to show that the $(I,J)$ minor is not identically zero. By the Cauchy-Binet formula,
\begin{align*}
    p(a) &\coloneqq \det_{I,J}\left(\psi\left(ab^\top\right)\right)\\
    &= \sum_{S\in\binom{K}{s}}\left(\prod_{k\in S}c_k\right)\underset{\coloneqq p_S(a)}{\underbrace{\det_{I,[s]}\left(\left[a^{(k)}\right]_{k\in S}\right)}}\underset{\coloneqq q_S}{\underbrace{\det_{J,[s]}\left(\left[b^{(k)}\right]_{k\in S}\right)}}.
\end{align*}
Note that this would be identically zero if $s$ were greater than $|K|$ (since it would be an empty sum), thus the rank and Kruskal rank of $\psi(ab^\top)$ are less than or equal to $\min\{m,n,|K|\}$ for all $a\in\R^m$. To go about proving the lower bound, first let $S\in\binom{K}{s}$ and enumerate it as $\{k_1,\ldots,k_s\}$. Then, applying the Leibniz determinant formula, we get
\begin{align*}
    p_S(a) &= \sum_{\sigma\in\sym(s)}\sgn(\sigma)\underset{\coloneqq p_{S,\sigma}(a)}{\underbrace{a_{i_1}^{k_{\sigma(1)}}\cdots a_{i_s}^{k_{\sigma(s)}}}}.
\end{align*}

Let $\tau\in\sym(s)$. If $p_{S,\sigma}=p_{S,\tau}$, then $k_{\sigma(t)}=k_{\tau(t)}~\forall t\in[s]$, so $\sigma=\tau$, and so, collecting monomial terms, $p_{S,\sigma}$ has coefficient $\sgn(\sigma)$ in $p_S$. Let $S'\in\binom{K}{s}$. If $p_{S,\sigma}=p_{S',\tau}$, then, similarly, $S=S'$ and $\sigma=\tau$. Thus, if $S'\neq S$, then the monomial $p_{S,\sigma}$ has coefficient zero in $p_{S'}$. So $(p_S)$ is linearly independent. And the $q_S$ are nonzero since they are minors of a Vandermonde matrix with roots which are nonzero and distinct, so $p\not\equiv 0$, completing the proof.
\end{proof}

\begin{theorem}
\label{thm:realrank}
Let $\psi:\R\to\R$ be real analytic at zero and not a polynomial there. Let its radius of convergence at zero be $\rho$. Let $b\in \R^n$ have entries which are nonzero and distinct. Define $M=\{a\in\R^m\mid |a_ib_j|<\rho~\forall (i,j)\}$. Then $M$ is open and the rank and Kruskal rank of $\psi(ab^\top)$ are $\min\{m,n\}$ for generic $a\in M$.
\end{theorem}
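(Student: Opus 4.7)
The openness of $M$ is immediate: since every $b_j$ is nonzero, $M$ is simply the open box $(-\rho/\max_j|b_j|,\rho/\max_j|b_j|)^m$, so in particular it is connected and contains the origin; both facts will matter below.

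For each pair $(I,J)\in\binom{[m]}{s}\times\binom{[n]}{s}$ with $s:=\min\{m,n\}$, set $f_{I,J}(a):=\det_{I,J}(\psi(ab^\top))$. Since $\psi$ is real analytic on $(-\rho,\rho)$, every entry $\psi(a_ib_j)$ is real analytic on $M$, and therefore so is $f_{I,J}$. The plan is to show that each $f_{I,J}$ is not identically zero on $M$; its zero set will then be closed and of measure zero, and the finite product $\prod_{I,J}f_{I,J}$ will provide the single nontrivial real analytic function whose complement is the required generic set. Outside that complement, every $s\times s$ minor of $\psi(ab^\top)$ is nonzero, which forces both the rank and Kruskal rank to equal $\min\{m,n\}$.

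To exhibit a surviving coefficient in the Taylor expansion of $f_{I,J}$ at the origin, I would substitute the absolutely convergent series $\psi(a_ib_j)=\sum_{k\in K}c_ka_i^kb_j^k$ (with $K=\{k:c_k\ne 0\}$) into the Leibniz expansion of the determinant and collect terms in $a$. Because each entry depends on only one coordinate $a_i$, these rearrangements are purely formal manipulations of a convergent power series and produce exactly the same Cauchy-Binet-style sum that appeared in the proof of Theorem~\ref{thm:polyrank}:
\begin{align*}
f_{I,J}(a)=\sum_{S\in\binom{K}{s}}\Bigl(\prod_{k\in S}c_k\Bigr)\,p_S(a)\,q_S,
\end{align*}
the only difference being that $K$ is now infinite. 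Since $\psi$ is not a polynomial at $0$, $K$ is indeed infinite, so $\binom{K}{s}$ is nonempty; the linear independence of $\{p_{S,\sigma}\}_{S,\sigma}$ and the nonvanishing of at least one $q_S$ established in that earlier proof together guarantee a nonzero monomial coefficient of $f_{I,J}$ at the origin.

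The identity theorem on the connected open set $M$ then upgrades this to $f_{I,J}\not\equiv 0$ throughout $M$, and the argument concludes as outlined above. The main technical point is justifying the interchange of the infinite expansion of $\psi$ with the Leibniz expansion of the determinant; but since each entry is a one-variable convergent power series on $M$, the bookkeeping collapses to the finite-sum calculation already carried out in Theorem~\ref{thm:polyrank}, so the real novelty here is only the use of the identity theorem to pass from a Taylor-coefficient statement at $0$ to a statement on all of $M$.
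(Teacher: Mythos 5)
Your proof is correct and reaches the same conclusion, but by a somewhat different route than the paper. The paper avoids manipulating an infinite Cauchy--Binet expansion by truncating: it takes the $s$ smallest exponents $K_s=\{k_1,\ldots,k_s\}$ present in the Taylor series of $\psi$, forms the polynomial determinant $p$ from $\sum_{k\in K_s}c_kx^k$ (which is nontrivial by Theorem~\ref{thm:polyrank}), and observes that every monomial of $p$ has total degree exactly $k_1+\cdots+k_s$, whereas any $S\in\binom{K}{s}$ containing exponents outside $K_s$ contributes only monomials of strictly higher degree. Thus $p$ is the complete degree-$(k_1+\cdots+k_s)$ homogeneous piece of the Taylor expansion of $q=\det_{I,J}(\psi(ab^\top))$, and a single nonzero coefficient in $p$ already forces $q\not\equiv 0$. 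Your version instead substitutes the full power series, carries the infinite Cauchy--Binet sum through the Leibniz expansion, and reads off that every monomial $p_{S,\sigma}$ appears with the nonzero coefficient $\sgn(\sigma)\bigl(\prod_{k\in S}c_k\bigr)q_S$. Both arguments are sound; yours requires the extra (though routine) justification of rearranging the determinant with an absolutely convergent series, which you do flag, whereas the paper's truncation sidesteps that entirely and only ever handles a finite polynomial. Your explicit remarks that $M$ is a connected open box and that Kruskal rank $\min\{m,n\}$ follows once every $s\times s$ minor is nonzero are both correct and worth stating; the paper leaves them implicit. One small framing correction: the ``real novelty'' of the paper's proof is precisely the degree-separation/truncation device, not the identity theorem step, which in your argument and the paper's is just the standard fact that a real analytic function with a nonzero Taylor coefficient on a connected open set is not identically zero there.
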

\begin{proof}
To see that $M$ is open, notice that it is the pre-image of $(-\rho,\rho)^{m\times n}$ under the continuous map $a\mapsto ab^\top$. Now, let $(c_k)$ be the coefficients of the Taylor expansion at zero of $\psi$. Define $K=\{k\in\N\cup\{0\}\mid c_k\neq 0\}$. Define $s=\min\{m,n\}$ and let $k_1,\ldots,k_s$ be the first $s$ integers in $K$. Define $K_s=\{k_1,\ldots,k_s\}$. Let $I\subset [m]$ and $J\subset [n]$ such that $|I|=|J|=s$. Define $p:M\to\R:a\mapsto \det_{I,J}(\sum_{k\in K_s} c_k(ab^\top)^{(k)})$ and $q:M\to R:a\mapsto \det_{I,J}(\psi(ab^\top))$. Note from the proof of Theorem~\ref{thm:polyrank} that the monomials of $p$ all have degree $k_1+\cdots+k_s$ whereas if $S\in \binom{K}{s}$ has any integers not in $K_s$, then its corresponding monomials will have degree $>k_1+\cdots+k_s$. Thus, $p$ is the truncation of the Taylor expansion at zero of $q$ to precisely the monomials of degree $k_1+\cdots+k_s$. Since, by Theorem~\ref{thm:polyrank}, $p$ is not identically zero, it has at least one nonzero coefficient, and so the Taylor expansion at zero of $q$ has at least one nonzero coefficient. Thus, $q$ is not identically zero, completing the proof.
\end{proof}

\subsection{Interpolation from the FNN}

Combining Theorem~\ref{thm:injective} and Theorem~\ref{thm:realrank} leads to a next-token prediction capacity lower bound for one-layer multi-head decoder-only transformers.

\begin{theorem}
\label{thm:full}
Let $d,m_0,d_0,m,\omega\in\N$ and $d_r\in\N~\forall r\in[m_0]$.
Let $\psi:\R\to\R$ be real analytic at some $\eta\in\R$ and not a polynomial there.
Let $h_{\theta}$ be the one-layer multi-head decoder-only transformer model defined in Equation~\eqref{eq:model}.
Then the next-token prediction capacity of $\{ h_{\theta}\mid\theta\}$ is greater than or equal to $m$.
\end{theorem}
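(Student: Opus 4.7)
The plan is to reduce the full transformer to the scalar self-attention setting of Theorem~\ref{thm:injective} by zeroing out all but a single ``channel,'' and then apply Theorem~\ref{thm:realrank} to the FNN sub-layer. Fix $n\le m$ and distinct contexts $\alpha^1,\ldots,\alpha^n$ of lengths $\tau_1,\ldots,\tau_n\le\tdoc$; I will construct parameters realizing any prescribed target $P\in(\ri\Delta^{\omega-1})^n$.

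First I would restrict the embedding and self-attention sub-layers to the first coordinate of the embedding dimension. Choose scalars $z\in\R^\omega$ and $u\in\R^\tdoc$, set $Z[1,:]=z^\top$ and $U[1,:]=u^\top$, and zero out the remaining rows. Pick $W_{1,1},W_{2,1}$ so that $W_{1,1}W_{2,1}^\top=e_1e_1^\top$, take $W_{3,1}$ with only its $(1,1)$ entry equal to one, zero out the heads $r>1$, and choose $W_0$ with only its $(1,1)$ entry equal to one. A direct unwinding of Equation~\eqref{eq:model} then yields $h_{2,\theta}(\alpha)=f(z,u,\alpha)\,e_1$, where $f$ is the scalar self-attention of Theorem~\ref{thm:injective}. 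By that theorem, for generic $(z,u)$ the scalars $s_i:=f(z,u,\alpha^i)$ are distinct and nonzero.

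Next I would arrange the FNN to sit at the expansion point of $\psi$. Zero out every row of $W\in\R^{d\times m}$ except the first and let $w\in\R^m$ denote that first row written as a column vector. Set the bias $b=\eta\one_m$ and define $\tilde\psi(y):=\psi(y+\eta)$, which is real analytic at zero and not a polynomial there. The hidden activation at $\alpha^i$ becomes $\tilde\psi(s_i w)$, so the full $m\times n$ activation matrix is $\tilde\psi(ws^\top)$. Since $s\in\R^n$ has distinct nonzero entries, Theorem~\ref{thm:realrank} gives that for generic $w$ in a neighborhood of zero this matrix has rank $\min\{m,n\}=n$; the generic conditions on $(z,u)$ and on $w$ are simultaneously satisfiable because the complement of the zero set of a nontrivial real analytic function is dense.

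Finally, rank $n$ makes the linear map $V\mapsto V^\top\tilde\psi(ws^\top)$ from $\R^{m\times\omega}$ to $\R^{\omega\times n}$ surjective. Let $Y\in\R^{\omega\times n}$ be the entry-wise logarithm of $P$; since each column of $P$ lies in $\ri\Delta^{\omega-1}$ and hence sums to one, applying $\phi$ column-wise to $Y$ recovers $P$. Solving the linear system for $V$ then yields parameters $\theta$ with $[(\phi\circ h_\theta)(\alpha^i)]_{i\in[n]}=P$, proving memory capacity at least $m$. I expect the main obstacle to be the dimension bookkeeping of the reduction step, after which every remaining step is either an application of Theorem~\ref{thm:injective} or Theorem~\ref{thm:realrank} or elementary linear algebra.
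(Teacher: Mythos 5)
Your proposal is correct and follows essentially the same path as the paper's proof: reduce the transformer to the scalar self-attention map $f$ of Theorem~\ref{thm:injective}, use that theorem to get distinct nonzero outputs $s_i$, invoke Theorem~\ref{thm:realrank} (after shifting the bias so $\psi(\cdot+\eta)$ is analytic at zero) to get a full-rank hidden matrix, and then solve linearly for $V$. The only cosmetic difference is the parameterization of the reduction: you zero out all but the first coordinate using $e_1$-sparse matrices, whereas the paper uses rank-one all-ones matrices such as $Z=\one_d z^\top/\sqrt{d}$; both embeddings yield $W^\top h_{2,\theta}(\alpha)=w f(z,u,\alpha)$ and the remainder of the argument is identical.
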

\begin{proof}
Set $n=m$. Let $\alpha^1,\ldots,\alpha^n\in\seqs$ be distinct and define $T=\max_{i\in[n]}|\alpha^i|$.
Invoking Theorem~\ref{thm:injective}, choose $(z,u)$ such that $[f(z,u,\alpha^i)]_{i\in[n]}$ has entries which are nonzero and distinct. Invoking Theorem~\ref{thm:realrank}, choose $w$ such that $\psi(w[f(z,u,\alpha^i)]_{i\in[n]}^\top+\eta\one_m\one_n^\top)$ has rank $n$.
Let $Y'\in(\ri\Delta^{\omega-1})^n$. Then there exists $Y\in\R^{\omega\times n}$ such that $\phi(Y)=Y'$ where $\phi$ is the column-wise softmax function. Furthermore, there exists $V\in\R^{m\times \omega}$ such that $Y=V^\top\psi(w[f(z,u,\alpha^i)]_{i\in[n]}^\top+\eta\one_m\one_n^\top)$. So, by defining $Z=\one_d z^\top/\sqrt{d}, U=\one_d u^\top/\sqrt{d},W_{1,r}=W_{2,r} = \one_{d}\one_{d_r}^\top/\sqrt{dd_r}~\forall r\in[m_0], W_{3,r}=\one_d\one_{d_0}^\top/\sqrt{dd_0}~\forall r\in[m_0], W_0=\one_{m_0d_0}\one_d^\top/(m_0\sqrt{dd_0}),W=\one_d w^\top/\sqrt{d},b=\eta\one_m$, we get that $Y'=[h_\theta(\alpha^i)]_{i\in[n]}$, completing the proof.
\end{proof}

Theorem~\ref{thm:full} shows how the one-layer transformer model can interpolate $m$ input-output pairs through the following mechanism: (1) the embedding and self-attention sub-layers map distinct sequences to distinct vectors, (2) the hidden FNN layer maps these vectors to a larger space where they are linearly independent, (3) the outer FNN layer is solved as a system of linear equations.

Theorem~\ref{thm:full} also suggests an algorithm for finding an interpolating solution. First, note that every minor of $\psi(W^\top[(h_{2,\theta}\circ h_{1,\theta})(\alpha^i)]_{i\in[n]}+b\one_n^\top)$ is real analytic on an open neighborhood of zero. Thus, it works to (1) sample $\theta$ from any continuous probability distribution, (2) choose $\epsilon>0$ sufficiently small so that the entries of $\epsilon W^\top[(h_{2,\theta}\circ h_{1,\theta})(\alpha^i)]_{i\in[n]}+b\one_n^\top$ are each in the interval of convergence of $\psi$, and (3) solve for $V$ in $Y=V^\top\psi(W^\top[(h_{2,\theta}\circ h_{1,\theta})(\alpha^i)]_{i\in[n]}+b\one_n^\top)$.

\cmt{Note that this is not a useful algorithm in the empirical setting because it requires that $\hat{p}$ be explicitly computed. And we are not claiming that we should try to use a transformer in the same way that it is used in our proof. In fact, as we will see in the next section, the embedding and self-attention sub-layers seem to be useful for more than just injectivity, which is, of course, expected. However, the proof does suggest a conceptual framework for guiding the design of network architectures. As long as each part of the proof continues to be satisfied, there is no reason not to swap out sub-layers for new mappings that have not been tried before. For example, in Section~\ref{sec:tokenavg} we will show that the self-attention sub-layer can be swapped with something much simpler without loss of the next-token prediction capacity lower bound. While it may not be a practical substitution, it suggests that there may be more practical substitutions out there.

Finally, it is time to circle back to Section~\ref{sec:nexttoken} and 
}
compare the lower bound of Theorem~\ref{thm:full} to the upper bound of Lemma~\ref{lma:sard}. \cmt{In order to do so,} we have to assume a maximum context length $T$. Also, let $d_r=d_1~\forall r\in[m_0]$ for simplicity. Then, the total number of parameters is $\omega m+m(d+1)+2m_0(d_0+d_1)d+(\omega+\tdoc)d$. Thus, the ratio between the upper and lower bounds is
\begin{align}
\label{eq:const}
    1+\frac{d+2}{\omega-1}+\frac{2m_0(d_0+d_1)d}{(\omega-1)m}+\frac{(\omega+\tdoc)d}{(\omega-1)m},
\end{align}
which is $O(1)$ if and only if the number of parameters in the FNN is at least a constant proportion of the total number of parameters. To see that this is generally the case, consider how, in practice, $\min\{\omega,m\}\ge d\ge \max\{d_0,d_1\}$ and $\omega m\ge \tdoc d$, so the ratio reduces to $O(m_0d^2/(\omega m))$, which is more clearly $O(1)$.

\section{Experiment}
\label{sec:experiment}

We also provide an experiment showing that $\Omega(n)$ parameters seems to be sufficient not only in terms of memory capacity, but also in terms of training.
\cmt{Taking a subset from a text corpus, we train the transformer model of Section~\ref{sec:transformer} on it. We increase the hidden dimension, $m$, until we can get the training error close to the entropy lower bound. By increasing the subset size, we can see how the number of parameters required to reach the entropy lower bound depends on the number of unique contexts.
}
In the line plot of Figure~\ref{fig-1}, \cmt{the dependence} seems to be roughly linear, suggesting that first-order optimization methods can find an interpolating solution almost as soon as one exists.
\cmt{Note that increasing the subset size not only increases the number of unique contexts, but also increases the vocabulary size, and so increases the number of parameters. Thus, $m$ does not have to increase very much to maintain the same training error, as can be seen in the heatmap of Figure~\ref{fig-1}. To see more clearly how $m$ affects the training error, we repeat the experiment but only train the FNN parameters this time. Again, we see a roughly linear dependence in the line plot of Figure~\ref{fig-2}, but we can also see that dependence more clearly in the heatmap. Note that training only the FNN parameters is similar to our proof technique where the embedding and self-attention sub-layers were only used for mapping injectively, not for training. The experiment shows that while this can work in practice, the slope in the line plot is roughly five times larger when only the FNN parameters are trained, so it is preferable to train all the parameters.}

\begin{figure*}
    \centering
    \includegraphics[scale=1]{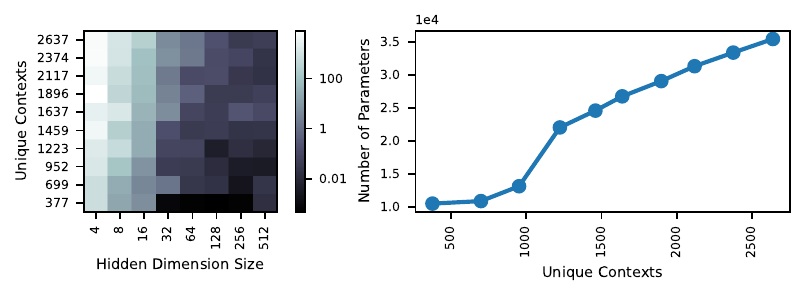}
    \caption{We show that our model requires more parameters to memorize an increasing number of unique contexts. Left: As the hidden dimension, $m$, increases and the number of unique contexts, $n$, decreases, the gap between the training error and the entropy lower bound trends downwards. Right: As the number of unique contexts increases, the minimum number of parameters required for the gap between the training error and the entropy lower-bound to fall below the minimum threshold increases.}
    \label{fig-1}
\end{figure*}

\begin{figure*}
    \centering
    \includegraphics[scale=1]{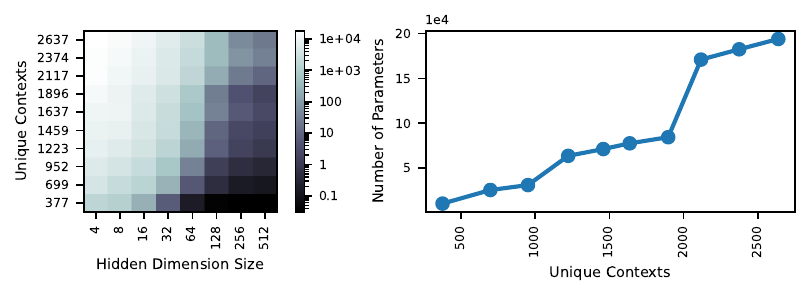}
    \caption{\cmt{Even when only training the FNN layers, we show that our model can memorize an increasing number of unique contexts as the hidden dimension $m$ is increased. Left: As the hidden dimension, $m$, increases and the number of unique contexts, $n$, decreases, the gap between the training error and the entropy lower bound trends downwards. Right: As the number of unique contexts increases, the minimum number of parameters required for the gap between the training error and the entropy lower-bound to fall below the minimum threshold increases (this trains both FNN linear layers, will check if only training the last layer works as well and let you know).}}
    \label{fig-2}
\end{figure*}

All experiments are performed on the TinyStories data set \cite{eldan2023tinystories}, where we take subsets of size 100, 200, 300, 400, 500, 600, 700, 800, 900, 1000 from the data set \cmt{and tokenize using WordPunctTokenizer from the Natural Language Toolkit. For simplicity and reduced computation time, we truncate each sequence to have a length of 10.} These data subsets have 377, 699, 952, 1223, 1459, 1637, 1896, 2117, 2374, 2637 unique contexts, respectively. \cmt{Finally, the corresponding vocabulary sizes are 182, 290, 358, 417, 469, 513, 560, 606, 648, 690, respectively.} Note that in our line plot \cmt{in Figure~\ref{fig-1}}, the minimum threshold varies for each data subset and is computed by taking the entropy lower bound and multiplying it by 0.01 (this helps to account for the fact that larger data sets have larger entropy lower bounds). \cmt{We perform the same calculation for the line plot in Figure~\ref{fig-2} instead multiplying by 0.05.} We train 8 transformer models for each data subset, with $m$ values 4, 8, 16, 32, 64, 128, 256, 512. Finally, we use an embedding size of 16 and the sinusoidal positional embedding from \cite{vaswani2017attention}.

The implementation of our model is a decoder-only transformer and does not have any encoder blocks. It has only one decoder block, which consists of a self-attention layer followed by a linear layer, after which a GELU activation function is applied. Only one head is used for the self-attention. The output of this decoder block is fed into another linear layer, which produces the final output. We include a bias in both linear layers. Unlike many of the more complicated transformer models, we do not use any dropout, layer normalization, or residual connections in our implementation. \cmt{The total number of parameters is $\omega m+17(\omega+m)+1088$ where $\omega$ is the vocabulary size. The total number of FNN parameters is $\omega m+\omega+17m$. The proportion of parameters that are in the FNN is around two thirds for the data points in the line plot of Figure~\ref{fig-1}, and increases from around two thirds to 94\% for the data points in the line plot of Figure~\ref{fig-2}.
}

To compute the training error on each iteration, the cross-entropy loss function is used. For optimization, we use the Adam optimizer \cite{kingma2015adam} in full batch mode, with a stepsize of 0.0001 and no regularization (sometimes referred to as weight decay). Our training error values are computed based on 50,000 iterations. We choose a large number of iterations since our data subsets are small and we want to see how close we can get to the entropy lower bound for each value of $m$. \cmt{Finally, we run our experiments on a computer with a Intel(R) Core(TM) i9-14900X processer and a NVIDIA GeForce RTX 4090 GPU with 24GB of memory.}
We refer you to the following Github link for the code to run our experiments: \texttt{https://github.com/curtfox/decoder-memory-capacity}.

\section{Token-averaging}
\label{sec:tokenavg}

In the proof of Theorem~\ref{thm:full}, all that was required of the embedding and self-attention sub-layers was that the composition of them maps injectively into $\R\backslash\{0\}$. This suggests a simple alternative to self-attention. Let $Z\in\R^{d\times\omega}$, $u\in\R^\N$, $W\in\R^{d\times m}$, $b\times\R^m$, and $V\in\R^{m\times\omega}$. Let $\theta$ denote $(Z,u,W,b,V)$. Let $\psi:\R\to\R$ be real analytic at a point and not a polynomial there. Define
\begin{align*}
    h_\theta:\seqs&\to\ri\Delta^{\omega-1}\\
    \alpha&\mapsto \phi\left(V^\top\psi\left(W^\top Z[:,\alpha]u[1:|\alpha|]+b\right)\right).
\end{align*}
We call this a token-averaged FNN. Essentially, we have extended the FNN to be defined on $\seqs$ through the mapping $\alpha\mapsto Z[:,\alpha]u[1:|\alpha|]$, which we call token-averaging, and which we replaced self-attention with. It turns out this simple model also has optimal next-token prediction capacity, as we now prove.

\begin{theorem}
\label{thm:avginj}
Let $\omega,T \in\N$. Define
\begin{align*}
    f:~&\R^{\omega}\times \R^{T }\times\cup_{t=1}^{T }[\omega]^t\to\R\\
    &(z,u,\alpha)\mapsto \sum_{t=1}^{|\alpha|}u_tz_{\alpha_t}.
\end{align*}
Then $f(z,u,\cdot)$ maps injectively into $\R\backslash\{0\}$ for generic $(z,u)$.
\end{theorem}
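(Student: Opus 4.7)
The plan is to imitate the structure of the proof of Theorem~\ref{thm:injective}, but with a much simpler argument at the core because $f$ is now a bilinear (hence polynomial) function of $(z,u)$, so no transcendence of $\exp(1)$ is needed. Writing $A\coloneqq \cup_{t=1}^{\tau}[\omega]^t$ and observing that each $f(\cdot,\cdot,\alpha)$ is a polynomial in $(z,u)\in\R^{\omega}\times\R^{\tau}$, I would define
\begin{align*}
    g \coloneqq \left(\prod_{\alpha\in A}f(\cdot,\cdot,\alpha)\right)\left(\prod_{\alpha,\beta\in A:\alpha\neq \beta}\bigl(f(\cdot,\cdot,\alpha)-f(\cdot,\cdot,\beta)\bigr)\right),
\end{align*}
which is real analytic. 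The set of $(z,u)$ where $f(z,u,\cdot)$ fails to map injectively into $\R\backslash\{0\}$ is exactly the zero set of $g$, so by the real analyticity machinery cited in Section~\ref{sec:prelims}, it suffices to prove that each factor is not identically zero.

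For the first family of factors, I would simply note that the coefficient of the monomial $u_1 z_{\alpha_1}$ inside $f(z,u,\alpha)=\sum_{t=1}^{\len(\alpha)}u_t z_{\alpha_t}$ is $1$ (since the only $t$ that produces a $u_1$ factor is $t=1$), so each $f(\cdot,\cdot,\alpha)$ is nontrivial. Equivalently, evaluating at $z=\one_{\omega},u=\one_{\tau}$ yields $\len(\alpha)\neq 0$.

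For the second family, I would split on how $\alpha\neq \beta$ can differ. If $\len(\alpha)\neq \len(\beta)$, say $\len(\alpha)>\len(\beta)$ without loss of generality, take $t=\len(\alpha)$; then the monomial $u_t z_{\alpha_t}$ has coefficient $1$ in $f(\cdot,\cdot,\alpha)$ but coefficient $0$ in $f(\cdot,\cdot,\beta)$ because $u_t$ does not appear in the $\beta$-sum at all. If instead $\len(\alpha)=\len(\beta)$, pick any position $s$ with $\alpha_s\neq \beta_s$ and look at the monomial $u_s z_{\alpha_s}$: in $f(\cdot,\cdot,\alpha)$ it has coefficient $1$, while in $f(\cdot,\cdot,\beta)$ it has coefficient $\delta_{\beta_s,\alpha_s}=0$. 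In either case $f(\cdot,\cdot,\alpha)-f(\cdot,\cdot,\beta)$ is nontrivial, which finishes the argument.

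The only mild obstacle is making sure the case split in the previous paragraph is exhaustive and that distinct $(t,\gamma)$ indeed give distinct monomials $u_t z_{\gamma}$ in $\R[z_1,\ldots,z_{\omega},u_1,\ldots,u_{\tau}]$, so that coefficient-comparison is valid. This is immediate from linear independence of the monomials, and there are no subtleties analogous to the softmax/transcendence step of Theorem~\ref{thm:injective}; the whole proof is essentially elementary linear algebra inside a polynomial ring.
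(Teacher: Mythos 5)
Your proposal is correct and follows essentially the same approach as the paper: define $g$ as the product of the factors $f(\cdot,\cdot,\alpha)$ and $f(\cdot,\cdot,\alpha)-f(\cdot,\cdot,\beta)$, identify the failure set with the zero set of $g$, and check each factor is nontrivial. Where the paper verifies nontriviality by evaluating at chosen points (e.g.\ $z=\one_{\omega},u=\one_{\tau}$, or $z=(1,\ldots,\omega),u=e_s$), you compare coefficients of a chosen monomial $u_t z_{\gamma}$ in the polynomial ring; these are equivalent ways of certifying a polynomial is nonzero, so the substance is the same.
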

\begin{proof}
First, let $A$ denote $\cup_{t=1}^{T }[\omega]^t$ and define
\begin{align*}
    g= \left(\prod_{\alpha\in A}f(\cdot,\cdot,\alpha)\right)\left(\prod_{\alpha,\beta\in A:\alpha\neq \beta}(f(\cdot,\cdot,\alpha)-f(\cdot,\cdot,\beta))\right).
\end{align*}
Note that the set of $(z,u)$ for which $f(z,u,\cdot)$ does not map injectively into $\R\backslash\{0\}$ is precisely the zero set of $g$. Thus, the result follows if and only if $g$ is nontrivial if and only if (1) each $f(\cdot,\cdot,\alpha)$ is nontrivial and (2) each $f(\cdot,\cdot,\alpha)-f(\cdot,\cdot,\beta)$ is nontrivial. Note that $f(\one_{\omega},\one_{T },\alpha)=|\alpha|$. This proves (1). It also proves (2) in the case where $|\alpha|\neq |\beta|$. All that is left is to prove (2) in the case where $|\alpha|=|\beta|$. Let $\alpha,\beta\in A$ such that $\alpha\neq\beta$ and $\tau\coloneqq |\alpha|=|\beta|$. Then there exists $s\in[\tau]$ such that $\alpha_s\neq \beta_s$. Choose such an $s$. Set $z=(1,\ldots,\omega)$ and $u=e_s$. Then $f(z,u,\alpha)=\alpha_s\neq \beta_s=f(z,u,\beta)$, proving (2), and so proving the theorem.
\end{proof}

\begin{theorem}
\label{thm:avgmem}
Let $d,m,\omega\in\N$. For all $\theta\coloneqq (Z,u,W,b,V)\in\R^{d\times \omega}\times \R^{\N}\times \R^{d\times m}\times\R^m\times\R^{m\times\omega}$, define
\begin{align*}
    h_{\theta}:\seqs&\to\R^{\omega}\\
    \alpha&\mapsto \phi\left(V^\top\psi\left(W^\top\sum_{t=1}^{|\alpha|}u_tz_{\alpha_t}+b\right)\right)
\end{align*}
where $\phi$ is the softmax function and $\psi:\R\to\R$ is real analytic at some $\eta\in\R$ and not a polynomial there. Then the next-token prediction capacity of $\{h_{\theta}\mid\theta\}$ is greater than or equal to $m$.
\end{theorem}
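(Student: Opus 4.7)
The plan is to adapt the argument from Theorem \ref{thm:full}, replacing the self-attention injectivity result Theorem \ref{thm:injective} with the token-averaging injectivity result Theorem \ref{thm:avginj}. Since there is no self-attention sub-layer here, the argument is in fact cleaner. I would (i) reduce to embedding dimension $d=1$, (ii) use Theorem \ref{thm:avginj} to choose embedding and position parameters so that the $n$ scalar outputs of the averaging step are nonzero and distinct, (iii) invoke Theorem \ref{thm:realrank} to guarantee a full-rank post-activation matrix, and (iv) solve a linear system for the output matrix $V$.

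For step (i), given scalars $z\in\R^\omega$, $w\in\R^m$, and the position vector $u\in\R^{\tdoc}$ that is already scalar-valued in the parameterization, set $Z=\one_d z^\top/\sqrt d$, $W=\one_d w^\top/\sqrt d$, and $b=\eta\one_m$. A direct computation gives
\[
W^\top\sum_{t=1}^{\len(\alpha)}u_t\, Z[:,\alpha_t]=w\cdot f(z,u,\alpha),
\]
where $f$ is the token-averaging map from Theorem \ref{thm:avginj}. Hence the model reduces to $\alpha\mapsto V^\top\psi\bigl(w\, f(z,u,\alpha)+\eta\one_m\bigr)$.

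For steps (ii)--(iv), set $n=m$ and let $\alpha^1,\ldots,\alpha^n$ be distinct contexts. By Theorem \ref{thm:avginj}, one can choose $(z,u)$ so that $x\coloneqq [f(z,u,\alpha^i)]_{i\in[n]}\in\R^n$ has nonzero, distinct entries. Let $\tilde\psi(t)\coloneqq\psi(t+\eta)$, which is real analytic at zero and not a polynomial there; the post-activation matrix is then $\tilde\psi(wx^\top)$. Theorem \ref{thm:realrank}, applied to $\tilde\psi$ with right factor $x$, produces an open neighborhood $M$ of $0\in\R^m$ on which $\tilde\psi(wx^\top)$ generically has rank $\min\{m,n\}=m$; picking any $w\in M$ achieving this rank makes the $m\times m$ post-activation matrix invertible. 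For any $Y\in\R^{\omega\times n}$ the linear system $Y=V^\top\tilde\psi(wx^\top)$ then solves for $V$, and since $\phi(\R^\omega)=\ri\Delta^{\omega-1}$ the resulting image is all of $(\ri\Delta^{\omega-1})^n$.

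There is no genuine obstacle here: once Theorems \ref{thm:avginj} and \ref{thm:realrank} are in hand, the proof is essentially bookkeeping. In particular, no analog of the delicate transcendence argument used in Theorem \ref{thm:injective} is needed, because Theorem \ref{thm:avginj} is proved by an elementary substitution. The only minor points to handle carefully are absorbing the bias shift into $\tilde\psi$ so that Theorem \ref{thm:realrank} applies at zero, and verifying that the rank-one reduction really collapses $W^\top\sum u_t Z[:,\alpha_t]$ to $w\,f(z,u,\alpha)$.
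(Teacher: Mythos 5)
Your proposal is correct and follows essentially the same route as the paper's own proof: reduce to embedding dimension one via $Z=\one_d z^\top/\sqrt d$ and $W=\one_d w^\top/\sqrt d$, invoke Theorem~\ref{thm:avginj} to get nonzero distinct scalars $f(z,u,\alpha^i)$, apply Theorem~\ref{thm:realrank} to obtain a rank-$m$ post-activation matrix, and solve linearly for $V$. The only cosmetic difference is that you make the bias shift explicit by introducing $\tilde\psi(t)=\psi(t+\eta)$, whereas the paper absorbs it implicitly by writing $\psi(wx^\top+b\one_n^\top)$ with $b=\eta\one_m$; the substance is identical.
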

\begin{proof}
Set $n=m$. Let $\alpha^1,\ldots,\alpha^n\in\seqs$ be distinct and define $T=\max_{i\in[n]}|\alpha^i|$.
Invoking Theorem~\ref{thm:avginj}, choose $(z,u)$ such that $[f(z,u,\alpha^i)]_{i\in[n]}$ has entries which are nonzero and distinct.
Invoking Theorem~\ref{thm:realrank}, choose $w$ such that $\psi(w[f(z,u,\alpha^i)]_{i\in[n]}^\top+\eta\one_m\one_n^\top)$ has rank $n$.
Let $Y'\in(\ri\Delta^{\omega-1})^n$. Then there exists $Y\in\R^{\omega\times n}$ such that $\phi(Y)=Y'$ where $\phi$ is applied column-wise. Furthermore, there exists $V\in\R^{m\times \omega}$ such that $Y=V^\top\psi(w[f(z,u,\alpha^i)]_{i\in[n]}^\top+\eta\one_m\one_n^\top)$. So, by defining $Z=\one_dz^\top/\sqrt{d}$ and $W=\one_dw^\top/\sqrt{d}$, we get that $Y=[h_{\theta}(\alpha^i)]_{i\in[n]}$, completing the proof.
\end{proof}

Interestingly, the proof is much simpler than for the transformer model. However, it is important to keep in mind that the memory capacity of a model is only one aspect of it. While the memory capacity is interrelated with the optimization and generalization properties of a model, it does not determine them. Thus, it is important to analyze the models that are actually used in practice, which we have done for transformers. On the other hand, when this analysis suggests alternative models, as it has here, it makes sense to try them in practice. So, we leave it as a future research direction to explore the usefulness of token-averaged FNNs in practice.

\section{Conclusion}
\label{sec:conclusion}

We precisely defined the concept of next-token prediction capacity, proved general upper bounds, and proved a lower bound for one-layer multi-head decoder-only transformers. Assuming that the activation function is continuously differentiable and $(\ran,\exp)$-definable (e.g. arctan, tanh, GELU), we showed that $\Omega(n\omega)$ parameters are necessary to memorize a data set with $n$ unique contexts and a vocabulary of size $\omega$. Assuming that the activation function is real analytic at a point and not a polynomial there (e.g. arctan, tanh, GELU), we showed that $\Omega(n\omega)$ parameters is also sufficient. Furthermore, we provided numerical evidence that the transformer can already be trained to the entropy lower bound in the $\Theta(n\omega)$ parameter regime. Theoretically investigating optimization and generalization in the $\Theta(n\omega)$ parameter regime is left as a future direction. It is also left as a future direction to analyze the next-token prediction capacity for multi-layer transformers and for modified architectures, such as Mixture of Softmaxes, which are not back-loaded, i.e. which have a greater proportion of parameters in the embedding and self-attention sub-layers than in the FNN sub-layer. Finally, it seems worthwhile to consider memory capacity in the more constrained setting of stationary processes.

\bibliographystyle{IEEEtran}
\bibliography{main}

\end{document}